\documentclass[twoside,11pt]{article}

\PassOptionsToPackage{hyphens}{url}

\usepackage{algorithm}
\usepackage{algpseudocode}

\usepackage[preprint,abbrvbib]{jmlr2e}

\makeatletter
\@for\jmlr@env:={theorem,lemma,proposition,remark,corollary,definition,%
                 example,conjecture,axiom}\do{%
  \expandafter\let\csname\jmlr@env\endcsname\relax
  \expandafter\let\csname end\jmlr@env\endcsname\relax
  \expandafter\let\csname c@\jmlr@env\endcsname\relax
  \expandafter\let\csname the\jmlr@env\endcsname\relax
  \expandafter\let\csname p@\jmlr@env\endcsname\relax}
\let\proof\relax 
\makeatother

\makeatletter
\long\def\@makecaption#1#2{%
  \vskip 10pt
  \sbox\@tempboxa{#1: #2}%
  \ifdim\wd\@tempboxa>\hsize
    \begingroup
      \sbox\@tempboxa{#1:}%
      \list{\usebox\@tempboxa}{%
        \setlength{\labelwidth}{\wd\@tempboxa}%
        \setlength{\leftmargin}{\labelwidth}%
        \addtolength{\leftmargin}{\labelsep}}%
      \item\relax #2\endlist\par
    \endgroup
  \else
    \hbox to\hsize{\hfil\box\@tempboxa\hfil}%
  \fi}
\makeatother

\usepackage{times}
\usepackage[T1]{fontenc}
\usepackage{amsmath,amsthm,mathtools}
\usepackage{array}
\usepackage{booktabs}
\usepackage{multirow}
\usepackage{xcolor}
\usepackage{enumitem}
\usepackage{microtype}
\hypersetup{
  pdftitle={Counterfactual Evaluation of Temporal Observation Protocols},
  pdfauthor={Xizhe Zhang},
  pdfsubject={Identifiability, calibration and design of the predictive value
              of undeployed temporal observation protocols},
  pdfkeywords={identifiability, observational equivalence, observation design,
               temporal aggregates, calibration}}
\usepackage[section]{placeins}
\usepackage{cleveref}

\usepackage{aliascnt}
\theoremstyle{plain}
\newtheorem{theorem}{Theorem}
\newaliascnt{proposition}{theorem}
\newtheorem{proposition}[proposition]{Proposition}
\aliascntresetthe{proposition}
\newaliascnt{lemma}{theorem}
\newtheorem{lemma}[lemma]{Lemma}
\aliascntresetthe{lemma}
\newaliascnt{corollary}{theorem}
\newtheorem{corollary}[corollary]{Corollary}
\aliascntresetthe{corollary}
\theoremstyle{definition}
\newaliascnt{definition}{theorem}
\newtheorem{definition}[definition]{Definition}
\aliascntresetthe{definition}
\newaliascnt{assumption}{theorem}

\aliascntresetthe{assumption}
\newaliascnt{example}{theorem}
\newtheorem{example}[example]{Example}
\aliascntresetthe{example}
\theoremstyle{remark}
\newaliascnt{remark}{theorem}

\aliascntresetthe{remark}

\Crefname{theorem}{Theorem}{Theorems}
\Crefname{proposition}{Proposition}{Propositions}
\Crefname{lemma}{Lemma}{Lemmas}
\Crefname{corollary}{Corollary}{Corollaries}
\Crefname{assumption}{Assumption}{Assumptions}
\Crefname{definition}{Definition}{Definitions}
\Crefname{remark}{Remark}{Remarks}
\Crefname{example}{Example}{Examples}
\Crefname{algorithm}{Algorithm}{Algorithms}

\crefname{theorem}{Theorem}{Theorems}
\crefname{proposition}{Proposition}{Propositions}
\crefname{lemma}{Lemma}{Lemmas}
\crefname{corollary}{Corollary}{Corollaries}
\crefname{assumption}{Assumption}{Assumptions}
\crefname{definition}{Definition}{Definitions}
\crefname{remark}{Remark}{Remarks}
\crefname{example}{Example}{Examples}
\crefname{algorithm}{Algorithm}{Algorithms}
\crefname{section}{Section}{Sections}
\crefname{subsection}{Section}{Sections}
\crefname{figure}{Figure}{Figures}
\crefname{table}{Table}{Tables}
\Crefname{section}{Section}{Sections}
\Crefname{subsection}{Section}{Sections}
\Crefname{figure}{Figure}{Figures}
\Crefname{table}{Table}{Tables}

\newcommand{\E}{\mathbb{E}}
\newcommand{\Prob}{\mathbb{P}}
\newcommand{\Var}{\operatorname{Var}}
\newcommand{\Cov}{\operatorname{Cov}}

\newcommand{\R}{\mathbb{R}}

\newcommand{\Ind}{\mathbf{1}}
\newcommand{\cN}{\mathcal{N}}
\newcommand{\cI}{\mathcal{I}}

\newcommand{\cD}{\mathcal{D}}

\newcommand{\cV}{\mathcal{V}}
\newcommand{\cR}{\mathcal{R}}

\newcommand{\cT}{\mathcal{T}}

\newcommand{\dd}{\,\mathrm{d}}
\newcommand{\opnorm}[1]{\left\lVert #1 \right\rVert_{\mathrm{op}}}

\DeclareMathOperator{\tr}{tr}
\DeclareMathOperator{\diag}{diag}
\DeclareMathOperator{\Diag}{Diag}
\DeclareMathOperator{\proj}{proj}
\DeclareMathOperator{\rank}{rank}
\DeclareMathOperator{\rowspace}{row}

\newcommand{\numIdEps}{0.1321}

\newcommand{\numIdDiscrepancy}{$10^{-16}$}

\newcommand{\numIdCeilPlus}{0.6817}
\newcommand{\numIdCeilMinus}{0.8274}

\newcommand{\numIdCovYTheta}{0.388250}
\newcommand{\numIdVarTheta}{0.428012}

\newcommand{\numEstReplications}{200}

\newcommand{\numEstFamily}{495}

\newcommand{\numSlopeMean}{-0.4134}

\newcommand{\numTailSlopeMean}{-0.4618}

\newcommand{\numSlopeOccZero}{-0.4142}

\newcommand{\numTailSlopeOccZero}{-0.4637}

\newcommand{\numAugStatBefore}{1}
\newcommand{\numAugStatAfter}{0}
\newcommand{\numAugFreeBefore}{4}
\newcommand{\numAugFreeOne}{2}
\newcommand{\numAugFreeTwo}{0}

\newcommand{\numSleepGrid}{128}
\newcommand{\numAfGrid}{96}

\newcommand{\numSwapRounds}{3}

\newcommand{\numTabTrainSubjects}{80}

\newcommand{\numSelectionSubsampleReps}{1000}
\newcommand{\numSelectionSubsamplePct}{80\%}
\newcommand{\numSweepMSmall}{20}
\newcommand{\numSweepDeltaKqSmall}{-0.006}
\newcommand{\numSweepDeltaUniSmall}{-0.004}

\newcommand{\numSweepDeltaKqFull}{+0.044}
\newcommand{\numSweepDeltaUniFull}{-0.044}

\newcommand{\numSelectionDeltaKqMed}{-0.014}
\newcommand{\numSelectionDeltaKqPlo}{-0.102}
\newcommand{\numSelectionDeltaKqPhi}{+0.078}

\newcommand{\numSelectionDeltaUniMed}{-0.012}
\newcommand{\numSelectionDeltaUniPlo}{-0.126}
\newcommand{\numSelectionDeltaUniPhi}{+0.097}

\newcommand{\numRegretRatioMax}{0.37}
\newcommand{\numRegretRatioMedian}{0.024}

\newcommand{\numRegretSlope}{-0.94}
\newcommand{\numEpsSlope}{-0.49}

\newcommand{\numDesignComparableInstances}{25}
\newcommand{\numDesignMethodMinMI}{0.601}
\newcommand{\numDesignMethodMeanMI}{0.864}
\newcommand{\numDesignMethodMedianMI}{0.852}
\newcommand{\numDesignMethodMinIMSE}{0.802}
\newcommand{\numDesignMethodMeanIMSE}{0.947}
\newcommand{\numDesignMethodMedianIMSE}{0.987}
\newcommand{\numDesignMethodMinLinear}{0.767}
\newcommand{\numDesignMethodMeanLinear}{0.982}
\newcommand{\numDesignMethodMedianLinear}{1.000}
\newcommand{\numDesignMethodMinKQ}{0.231}
\newcommand{\numDesignMethodMeanKQ}{0.899}
\newcommand{\numDesignMethodMedianKQ}{0.988}
\newcommand{\numDesignMethodMinGreedy}{0.912}
\newcommand{\numDesignMethodMeanGreedy}{0.992}
\newcommand{\numDesignMethodMedianGreedy}{1.000}
\newcommand{\numDesignMethodMinSwap}{0.987}
\newcommand{\numDesignMethodMeanSwap}{0.999}
\newcommand{\numDesignMethodMedianSwap}{1.000}

\newcommand{\numDesignEvalFracStat}{0.23}

\newcommand{\numDesignAwareMinHetero}{1.000}
\newcommand{\numDesignGreedyMinHetero}{0.912}

\newcommand{\numDesignCrossHetero}{0.765}
\newcommand{\numDesignEvalFracHetero}{0.09}

\newcommand{\numDesignAwareMinRecency}{0.987}

\newcommand{\numDesignAwareMinMatern}{0.994}

\newcommand{\numSleepRecordings}{197}
\newcommand{\numSleepSubjects}{100}
\newcommand{\numSleepHours}{3818}

\newcommand{\numSleepMedian}{22.6}

\newcommand{\numLtafRecords}{84}
\newcommand{\numLtafHours}{1934}

\newcommand{\numSleepJaccardMin}{0.061}
\newcommand{\numSleepJaccardMax}{0.091}

\newcommand{\numLtafCohortAll}{84}

\newcommand{\numLtafMedianHours}{24.0}

\newcommand{\numCfContigFour}{+0.648}
\newcommand{\numCfUniformFour}{+0.682}

\newcommand{\numCfContigSixteen}{+0.659}
\newcommand{\numCfUniformSixteen}{+0.738}
\newcommand{\numCfAwareSixteen}{+0.694}
\newcommand{\numCfKqSixteen}{+0.650}
\newcommand{\numCfContigSixtyFour}{+0.863}
\newcommand{\numCfUniformSixtyFour}{+0.881}

\newcommand{\numCfAfContigOneHour}{+0.696}
\newcommand{\numCfAfDispOneHour}{+0.971}
\newcommand{\numCfAfContigFourHour}{+0.851}
\newcommand{\numCfAfDispFourHour}{+0.998}

\newcommand{\numAfObservedPctFour}{4.17\%}
\newcommand{\numAfEquivalentHoursFour}{1}
\newcommand{\numAfObservedPctSixteen}{16.67\%}

\newcommand{\numSleepFixedRangeResamples}{2000}

\newcommand{\numSleepAdjustedPositiveRanges}{9}

\newcommand{\numSleepCohortCells}{30}
\newcommand{\numSleepCohortPositiveRanges}{10}
\newcommand{\numSleepCohortNegativeRanges}{5}
\newcommand{\numSleepCohortUnresolvedRanges}{15}

\newcommand{\numSleepFullRemSixteenAdjusted}{+0.079 [-0.010, +0.168]}
\newcommand{\numSleepFullRemSixteenSC}{+0.221 [-0.010, +0.428]}
\newcommand{\numSleepFullRemSixteenST}{+0.161 [-0.445, +0.650]}

\newcommand{\numSleepFullNThreeSixteenAdjusted}{+0.151 [+0.058, +0.290]}
\newcommand{\numSleepFullNThreeSixteenSC}{-0.293 [-0.430, -0.155]}
\newcommand{\numSleepFullNThreeSixteenST}{+0.554 [+0.286, +0.900]}

\newcommand{\numSleepFullWakeSixteenAdjusted}{+0.019 [+0.005, +0.040]}
\newcommand{\numSleepFullWakeSixteenSC}{+0.321 [+0.088, +0.529]}
\newcommand{\numSleepFullWakeSixteenST}{-0.123 [-0.624, +0.890]}

\newcommand{\numAfFixedRangeResamples}{2000}
\newcommand{\numAfBootDiffOne}{-0.061}
\newcommand{\numAfBootRangeOne}{[-0.299, +0.164]}
\newcommand{\numAfBootDiffFour}{+0.274}
\newcommand{\numAfBootRangeFour}{[+0.143, +0.458]}
\newcommand{\numAfBootDiffSixteen}{+0.148}
\newcommand{\numAfBootRangeSixteen}{[+0.076, +0.249]}

\newcommand{\numResRegretCoarseSmall}{0.0510}
\newcommand{\numResRegretFineSmall}{0.03749}
\newcommand{\numResRegretCoarseLarge}{0.03706}
\newcommand{\numResEpsFourSmall}{0.2255}
\newcommand{\numResEpsFourLarge}{0.0371}

\newcommand{\numResClassSizeOne}{2}
\newcommand{\numResClassSizeTwo}{5}
\newcommand{\numResClassSizeThree}{75}
\newcommand{\numResClassSizeFour}{568}

\newcommand{\numLtafResolutionSpread}{0.0073}
\newcommand{\numLtafResolutionRcond}{$10^{-10}$}

\newcommand{\numFloorFineJaccardMin}{0.33}

\newcommand{\numFloorCoarseSpread}{0.0074}

\newcommand{\numBalanceFineJaccardMin}{0.14}

\newcommand{\numAfCohortNStrict}{71}
\newcommand{\numAfCohortStrictContigFour}{+0.596}
\newcommand{\numAfCohortStrictDispFour}{+0.958}

\newcommand{\numAfCohortStrictContigSixteen}{+0.748}
\newcommand{\numAfCohortStrictDispSixteen}{+0.998}

\newcommand{\numAfCohortNAll}{84}
\newcommand{\numAfCohortAllContigFour}{+0.696}
\newcommand{\numAfCohortAllDispFour}{+0.971}
\newcommand{\numAfCohortAllContigSixteen}{+0.851}
\newcommand{\numAfCohortAllDispSixteen}{+0.998}
\newcommand{\numAfCohortNMixed}{33}
\newcommand{\numAfCohortMixedContigFour}{+0.180}
\newcommand{\numAfCohortMixedDispFour}{+0.829}
\newcommand{\numAfCohortMixedContigSixteen}{+0.430}
\newcommand{\numAfCohortMixedDispSixteen}{+0.990}

\title{Counterfactual Evaluation of Temporal Observation Protocols}

\author{\name Xizhe Zhang \email zhangxizhe@gmail.com \\
        \addr ORCID: \href{https://orcid.org/0000-0002-8684-4591}%
        {0000-0002-8684-4591}}

\ShortHeadings{Counterfactual Evaluation of Temporal Protocols}{Zhang}

\begin{document}
\maketitle
\thispagestyle{plain}

\begin{abstract}%
We study counterfactual protocol evaluation: whether data collected under a realised observation protocol determine the predictive value of alternatives that were never deployed. Protocol value is the population $R^2$ of the Bayes-optimal predictor of a fixed trajectory-level target from the measurements an alternative would collect. We show that even infinite benchmark data need not determine this value: distinct latent covariance structures can induce the same benchmark measurement--target law while assigning different values to the same alternative. We develop a value-specific identification theory in which only latent ambiguity that changes the alternative's value matters. For linear targets, invisible covariance directions certify non-identification, while targeted measurements can restore identification without recovering the full latent covariance; an exact permutation construction extends the result to nonlinear aggregate targets. With finite dense calibration data, uniform error bounds control protocol-selection regret and distinguishable value gaps. Exact marginal gains then support cost-constrained, target-aware observation design. Simulations and retrospective analyses of Sleep-EDF and Long-Term AF show that broad temporal-layout differences can be more reliably distinguished than fine placements selected from finite data. Together, these results connect identification, calibration resolution and observation design for undeployed protocols.
\end{abstract}

\begin{keywords}
identifiability, observational equivalence, observation design,
temporal aggregates, calibration
\end{keywords}

\section{Introduction}
\label{sec:introduction}

Predictive performance in temporal learning depends not only on the learner, but also on the observation protocol under which the data are collected. Standard benchmarks typically hold this protocol fixed and compare learners on the resulting data. This leaves a different question unanswered: what predictive performance could be achieved under an undeployed observation protocol? We refer to this problem as \emph{counterfactual protocol evaluation}. The central question is whether the joint measurement–target distribution under the realised protocol determines the predictive value of an undeployed protocol.

This question is particularly important when the prediction target is defined over a substantially longer time horizon than the observed input. In such settings, the realised protocol may omit temporal information relevant to the target. Examples include predicting whole-night sleep-stage proportions from partially observed sleep records and long-horizon atrial-fibrillation burden from shorter monitoring windows. Weak predictive performance may therefore reflect limitations of the learner, limited sample size, or target-relevant information excluded by the observation protocol. Better models and larger datasets can address the first two limitations, but the third requires changing what is observed.

Several lines of work address related questions. Methods for incomplete or irregular temporal data learn prediction rules from irregularly or incompletely observed training data \citep{che2018grud,rubanova2019latent,kidger2020neural}. Experimental-design and sensor-placement methods, including optimal design for longitudinal data, choose future measurements to optimise a criterion under an assumed or estimated model \citep{chaloner1995bayesian,krause2008near,ji2017optimal}. Bayes-error analyses characterise predictive limits for a given feature--target distribution \citep{fukunaga1987bayes,ishida2023performance}. Most closely related, active feature acquisition performance evaluation studies how to evaluate a new acquisition policy from data collected under a different retrospective acquisition process, under identification assumptions that include sufficient support for the acquisitions required by the target policy \citep{vonkleist2025evaluation,precup2000eligibility,kallus2020double}. In our setting, an alternative protocol may instead require measurements that were never collected under the realised protocol. This leaves a different identification question: whether the joint measurement--target distribution generated by one realised observation protocol determines the predictive value of another protocol whose measurements were never collected.

We develop a theory of counterfactual protocol evaluation centred on the information needed to determine the value of an undeployed protocol. The value of an alternative protocol is identified precisely when it is constant across all latent models that induce the same realised measurement--target law \citep{koopmans1950identification,rothenberg1971identification}. This criterion is value-specific: the value may be identified even when the latent covariance is not, provided that the remaining ambiguity does not change the value being evaluated. Under a Gaussian latent-trajectory model with noisy linear measurements, we show that this condition can fail even with infinite data under the realised protocol, because distinct latent covariances can generate the same joint law of the realised measurements and target while assigning different values to the same alternative. For linear targets, we characterise the unresolved covariance variation through perturbations invisible to the realised law; nonzero directional sensitivity of the alternative's value along such a perturbation certifies local non-identification. Targeted augmentation can remove the value-changing ambiguity and identify the value of a specified alternative without recovering the full latent covariance. A stationary construction gives the sharp minimal failure case in the stationary, standardised, one-observation setting with a mean target, while a permutation construction establishes exact non-identification for nonlinear aggregate targets.

Targeted augmentation may be sufficient when the value of a specified alternative is sought. For an entire candidate family, densely observed calibration trajectories provide a stronger, family-wide route: at the population level, their law identifies the latent covariance used to evaluate all candidates. With only finitely many such trajectories, two questions remain: how accurately candidate values can be distinguished and how those distinctions should guide design. We derive uniform bounds that propagate covariance-estimation error to protocol-value error over the candidate family. The resulting bounds yield selection-regret guarantees and a calibration resolution below which value differences cannot be distinguished reliably. They also link the useful granularity of the candidate class to the amount of calibration data: finer classes can improve the attainable value, but their increasingly small differences support reliable selection only when the data can resolve them. At that resolution, we formulate cost-constrained, target-aware observation design and derive exact rank-one marginal gains for adding measurements under constraints on timing, support, repetition, noise, and cost. Simulations examine calibration error, protocol selection, and design under known data-generating laws. Retrospective analyses of Sleep-EDF and Long-Term AF annotations compare alternative temporal layouts at matched observation budgets \citep{kemp2000analysis,mourtazaev1995age,goldberger2000physiobank,petrutiu2007abrupt}; the pooled analyses show clearer differences between pre-specified dispersed and contiguous layouts than among learned Sleep supports, whose locations and held-out advantages vary across targets and resamples.

Taken together, these results place identification before optimisation in temporal observation design. The information required to evaluate a protocol is not generally the information required to recover the full latent dependence structure; it is the information needed to eliminate ambiguity that changes the protocol values under consideration. Finite calibration then determines the resolution at which those values can be compared, and the granularity of the design problem should be matched to that resolution. This value-directed view links the choice of additional measurements, the amount of calibration data, and the level of detail at which candidate protocols can be compared and optimised.

The remainder of the paper is organised as follows. Section 2 defines temporal aggregate targets, observation protocols, and protocol value. Section 3 develops value-specific identification and targeted augmentation. Section 4 studies finite calibration and protocol comparison, and Section 5 develops target-aware observation design. Section 6 reports the simulations and retrospective analyses. Section 7 reviews related work, and Sections 8 and 9 present the discussion and conclusion.

\section{Problem Formulation and Protocol Value}
\label{sec:formulation}
Consider a benchmark collected under one observation protocol and a future study
that may use another. The benchmark contains the measurements produced by the
realised protocol together with a trajectory-level target used for prediction.
The full latent trajectory need not be observed. An alternative protocol may
include measurements that were not collected in the benchmark. We first define
the target and protocol measurements in continuous time and then introduce the
discrete Gaussian model used in the theoretical analysis.

An \textit{observation protocol} is a finite collection of acquisition actions. An \textit{observation design} is the decision problem of choosing a protocol from a feasible family. The \textit{realised protocol} is the one that generated the benchmark data; an \textit{alternative protocol} is a feasible protocol whose predictive value is to be evaluated even though the benchmark does not contain the complete measurement vector that it would produce.

\subsection{Temporal Aggregate Targets and Observation Protocols}
\label{sec:temporal-protocol}

For unit $i$, let $Z_i=\{Z_i(t):t\in[0,T]\}$ be a latent scalar trajectory. The
supervised target is one number obtained from the whole trajectory. Specifically,
we first transform the state at each time by a function $g$ and then average the
transformed values over the observation horizon:
\begin{equation}
\Theta_{i,g}=\int_0^T\omega_T(t)\,g\{Z_i(t)\}\dd t,
\qquad \omega_T(t)\ge0,
\qquad \int_0^T\omega_T(t)\dd t=1 .
\label{eq:label-continuous}
\end{equation}
The weight function $\omega_T$ specifies which parts of the horizon contribute
to the target. Uniform weights, $\omega_T(t)=1/T$, give an ordinary time
average; non-uniform weights can emphasise particular periods.

Two choices of $g$ recur throughout the paper. If $g(z)=z$, then
$\Theta_{i,g}$ is the average level of the latent process, such as average
electricity demand over a day. If
$g_c(z)=\Ind\{z>c\}$, then $\Theta_{i,g_c}$ is the weighted fraction of time for
which the process exceeds $c$, and therefore represents exceedance-time burden.
Sleep-stage proportions, atrial-fibrillation burden and time spent in a
symptomatic state are examples of occupation-type targets.
The target is fixed when protocols are compared: changing the protocol changes
what is measured, not the quantity to be predicted.

One acquisition action $a$ returns a noisy linear measurement of the
trajectory,
$Y_{i,a}=L_aZ_i+\varepsilon_{i,a}$, where $L_a$ may represent a point
measurement or an average over a time window. We assume
$\varepsilon_{i,a}\sim\cN(0,r_a)$, independently across actions and units and
independently of $Z_i$. Each action has cost $c_a$. For a cost budget
$\mathcal B>0$, a protocol is a finite set $S=\{a_1,\ldots,a_D\}$ satisfying
$\sum_{a\in S}c_a\le\mathcal B$, and $\Pi_{\mathcal B}$ denotes the collection
of all such feasible protocols. Identification and estimation use this action
representation; \cref{sec:design} specifies how actions encode timing, support,
repetition, noise and cost.

\subsection{Discrete Gaussian Model}
\label{sec:discrete}

The theoretical analysis uses a grid $t_1<\cdots<t_p$ on $[0,T]$. We collect
the latent states on this grid in
$Z_i=(Z_i(t_1),\ldots,Z_i(t_p))^\top$ and assume
\begin{equation}
Z_i\sim\cN(0,K),\qquad \diag(K)=\Ind_p.
\label{eq:discrete-model}
\end{equation}
Thus $K$ records the dependence between times, while every marginal state has
mean zero and variance one. This standardisation fixes the scale on which $g$
is defined. Here $\diag(M)$ is the vector of diagonal entries of $M$, whereas
$\Diag(v)$ is the diagonal matrix with diagonal $v$.

Let $\omega=(\omega_1,\ldots,\omega_p)^\top$ be non-negative quadrature weights
with $\Ind_p^\top\omega=1$. The continuous target and the protocol measurements
then become
\begin{equation}
\Theta_{i,g}=\sum_{j=1}^p\omega_j\,g(Z_{ij}),
\qquad
Y_{i,S}=A_SZ_i+\varepsilon_{i,S},
\qquad
\varepsilon_{i,S}\sim\cN(0,R_S).
\label{eq:discrete-protocol}
\end{equation}
Each row $\ell_a^\top$ of $A_S\in\R^{D\times p}$ describes one action. A point
measurement at $t_j$ has $\ell_a=e_j$. A window measurement has non-negative
entries on the grid points covered by that window and is normalised so that
$\Ind_p^\top\ell_a=1$. With independent action noise,
$R_S=\Diag(r_a)_{a\in S}$. In short, $K$ describes the latent trajectory,
$\omega$ describes the target, and $(A_S,R_S)$ describes what protocol $S$
observes.

The empty protocol has no observations and explains no target variation. We
set
\begin{equation}
Q_\varnothing(K)=0_{p\times p},\qquad \cI_g(\varnothing;K)=0.
\label{eq:empty-protocol}
\end{equation}

\subsection{Protocol Value under the Gaussian Model}
\label{sec:protocol-values}

We value a protocol by the fraction of target variance explained by the best
population predictor based on its measurements.

\begin{definition}[Bayes value of an observation protocol]
\label{def:values}
Let $\Theta\in L^2$ be a scalar target with $\Var(\Theta)>0$. The Bayes protocol
value under squared loss is
\begin{equation}
\cI_{\mathrm{Bayes}}(S)=\frac{\Var\{\E(\Theta\mid Y_S)\}}{\Var(\Theta)}.
\label{eq:bayes-value}
\end{equation}
It is the population $R^2$ of the optimal predictor based on $Y_S$ and lies in
$[0,1]$.
\end{definition}

The best-linear analogue restricts the predictor to be affine. With
$\Sigma_S=\Var(Y_S)$ and $c_S=\Cov(Y_S,\Theta)$, it is
\begin{equation}
\cI_{\mathrm L}(S)=\frac{c_S^\top\Sigma_S^{+}c_S}{\Var(\Theta)},
\label{eq:linear-value}
\end{equation}
where $\Sigma_S^{+}$ is the Moore--Penrose inverse
\citep{penrose1955generalized}.
It depends only on second moments and satisfies
$\cI_{\mathrm L}(S)\le\cI_{\mathrm{Bayes}}(S)$, with equality when
$\E(\Theta\mid Y_S)$ is affine in $Y_S$.

Equations~\eqref{eq:bayes-value} and \eqref{eq:linear-value} define population
quantities. The empirical analyses evaluate fitted predictors by pooled
held-out cross-fitted $R^2$.

Under the discrete Gaussian model, the Bayes value can be calculated from $K$,
$g$ and the protocol matrices. The target transformation $g$ converts a latent
Gaussian correlation into a covariance as follows. For standard bivariate
normal variables
$(U,V_r)$ with correlation $r$, define
\begin{equation}
C_g(r)=\Cov\{g(U),g(V_r)\}
=\sum_{k\ge1}\frac{a_k^2}{k!}\,r^k,
\qquad g\in L^2(\phi),
\label{eq:Cg}
\end{equation}
where $\phi$ is the standard normal distribution and
$g-\E g=\sum_{k\ge1}(a_k/k!)H_k$ is the Hermite expansion of $g$ in
$L^2(\phi)$. In particular,
$\Cov\{g(Z_j),g(Z_k)\}=C_g(K_{jk})$. All dependence on the possibly nonlinear
target function enters the value calculation through $C_g$.

The protocol enters through the part of the latent trajectory recoverable from
its measurements. Let $\Sigma_S(K)=A_SKA_S^\top+R_S$. For a non-empty protocol
with nonsingular $\Sigma_S(K)$, Gaussian conditioning gives
\begin{equation}
Q_S(K)=KA_S^\top\Sigma_S(K)^{-1}A_SK
      =\Cov\{\E(Z\mid Y_S)\}.
\label{eq:QP}
\end{equation}
Thus $Q_S(K)$ is the covariance of the part of the trajectory recovered from
$Y_S$. The residual covariance used in the marginal-gain calculation is
introduced in \cref{sec:design}.

Combining this covariance transform with Gaussian conditioning yields the
protocol-value identity below \citep{zhang2026label}.

\begin{proposition}[Gaussian protocol-value identity; Zhang, 2026]
\label{prop:risk-identity}
Under \eqref{eq:discrete-model}--\eqref{eq:discrete-protocol}, let
$g\in L^2(\phi)$ and assume $V_g(K)>0$. Define
\begin{equation}
V_g(K)=\sum_{j,k=1}^p\omega_j\omega_k\,C_g(K_{jk}),
\qquad
F_g(S;K)=\sum_{j,k=1}^p\omega_j\omega_k\,C_g\{Q_S(K)_{jk}\}.
\label{eq:VF}
\end{equation}
Then $V_g(K)=\Var(\Theta_g)$ and
$F_g(S;K)=\Var\{\E(\Theta_g\mid Y_S)\}$, and
\begin{equation}
\cI_g(S;K)=\cI_{\mathrm{Bayes}}(S)=\frac{F_g(S;K)}{V_g(K)}.
\label{eq:ceiling}
\end{equation}
\end{proposition}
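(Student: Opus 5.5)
The proof has three parts: compute $\Var(\Theta_g)$ by expanding the sum, identify $\E(\Theta_g\mid Y_S)$ through a Gaussian decomposition of $Z$, and compute its variance with the Hermite (Mehler) formula. The ratio then gives \eqref{eq:ceiling}.

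\emph{Variance of the target.} Expand $\Var(\Theta_g)=\sum_{j,k}\omega_j\omega_k\Cov\{g(Z_j),g(Z_k)\}$. Since $\diag(K)=\Ind_p$, each pair $(Z_j,Z_k)$ is standard bivariate normal with correlation $K_{jk}$. By \eqref{eq:Cg} each covariance equals $C_g(K_{jk})$, which gives $V_g(K)=\Var(\Theta_g)$. Here \eqref{eq:Cg} follows from Mehler's identity $\E\{H_m(U)H_n(V_r)\}=\delta_{mn}\,n!\,r^n$.

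\emph{Conditional expectation.} Write $M=\E(Z\mid Y_S)=KA_S^\top\Sigma_S^{-1}Y_S$ and $W=Z-M$. Then $W$ is independent of $Y_S$, and $W\sim\cN(0,K-Q_S)$ with $Q_S=Q_S(K)$ as in \eqref{eq:QP}. Hence
\[
\E\{g(Z_j)\mid Y_S\}=(T_{\sigma_j}g)(M_j),\qquad \sigma_j^2=1-(Q_S)_{jj},
\]
where $T_\sigma g(m)=\E\,g(m+\sigma\xi)$ with $\xi$ standard normal. The key step is the Hermite expansion of this Gaussian smoothing. Write $q_j=(Q_S)_{jj}\in[0,1]$ and $M_j=\sqrt{q_j}\,U_j$ with $U_j$ standard normal. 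For $q_j>0$, use the generating function $e^{tx-t^2/2}=\sum_n H_n(x)t^n/n!$ and average over the noise. This gives
\[
\E\{H_n(\sqrt{q}\,U+\sqrt{1-q}\,\xi)\mid U\}=q^{n/2}H_n(U),
\]
and therefore
\[
\E\{g(Z_j)\mid Y_S\}-\E g=\sum_{n\ge1}\frac{a_n}{n!}\,q_j^{n/2}H_n(U_j),
\]
with convergence in $L^2$. If $q_j=0$, the term is constant, which is consistent with the formula.

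\emph{Covariance of the conditional means.} Next compute $\Cov[\E\{g(Z_j)\mid Y_S\},\E\{g(Z_k)\mid Y_S\}]$. The pair $(U_j,U_k)$ is standard bivariate normal with correlation $(Q_S)_{jk}/\sqrt{q_jq_k}$. Apply Mehler's identity again, termwise; this is justified by $L^2$ convergence and Cauchy–Schwarz. The covariance becomes
\[
\sum_n\frac{a_n^2}{n!}(q_jq_k)^{n/2}\Bigl(\frac{(Q_S)_{jk}}{\sqrt{q_jq_k}}\Bigr)^n=C_g\{(Q_S)_{jk}\}.
\]
Since $Q_S$ is a covariance matrix dominated by $K$, we have $|(Q_S)_{jk}|\le1$, so the series for $C_g$ converges. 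If $q_j=0$ or $q_k=0$, then $(Q_S)_{jk}=0$ and both sides vanish. Summing with the weights $\omega_j\omega_k$ gives $F_g(S;K)=\Var\{\E(\Theta_g\mid Y_S)\}$.

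\emph{Conclusion.} Dividing by $V_g(K)>0$ gives \eqref{eq:ceiling} via \cref{def:values}. The empty protocol is consistent with the convention \eqref{eq:empty-protocol}, since $C_g(0)=0$.

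The main obstacle is the smoothing step: showing that Gaussian smoothing acts on Hermite polynomials by $H_n\mapsto q^{n/2}H_n$ after rescaling. Getting this scaling right is what makes $C_g$ evaluated at $Q_S$ come out, rather than $C_g$ evaluated at a normalised correlation. The degenerate cases $q_j\in\{0,1\}$ and the interchange of sums with expectations need only brief care.
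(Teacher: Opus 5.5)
Your proof is correct, but it takes a more computational route than the paper's.

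\textbf{The paper's route.} The paper uses a posterior-replica argument. Draw $Z^{(1)},Z^{(2)}$ conditionally independently from the law of $Z$ given $Y_S$; in your notation, $Z^{(i)}=M+W^{(i)}$ with independent copies $W^{(i)}\sim\cN(0,K-Q_S)$. Conditional independence gives
$\Cov[\E\{g(Z_j)\mid Y_S\},\E\{g(Z_k)\mid Y_S\}]=\Cov\{g(Z^{(1)}_j),g(Z^{(2)}_k)\}$.
The pair $(Z^{(1)}_j,Z^{(2)}_k)$ is jointly Gaussian with unit variances and cross-covariance $Q_S(K)_{jk}$. The bivariate identity \eqref{eq:Cg} therefore yields $C_g\{Q_S(K)_{jk}\}$ in one step.

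\textbf{What the paper's route avoids.} It needs no smoothing lemma, no rescaling $M_j=\sqrt{q_j}\,U_j$, and no second use of Mehler's identity. It also needs no separate handling of $q_j\in\{0,1\}$ and no extra $L^2$ interchange, because all the analysis is already packaged in \eqref{eq:Cg}.

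\textbf{What your route buys.} Your calculation is in effect the Hermite expansion of that replica covariance, carried out by hand. It gives something the replica argument does not display: the explicit chaos expansion
$\E\{g(Z_j)\mid Y_S\}-\E g=\sum_{n\ge1}(a_n/n!)\,q_j^{n/2}H_n(U_j)$.
This shows that the $n$th Hermite component of the posterior mean keeps only the fraction $q_j^{n}$ of its prior variance. It makes transparent why the nonlinear parts of a target are recovered less efficiently than its linear part at the same $Q_S(K)$.
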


Thus $V_g(K)$ is total target variance, $F_g(S;K)$ is the part explained by
protocol $S$, and $\cI_g(S;K)$ is the corresponding Bayes-optimal population
$R^2$ based on $Y_S$. A proof is given in
Appendix~\ref{sec:app-identification}.

For two target functions, the covariance transform has a simple form. For the
mean target $g(z)=z$, $C_g(r)=r$, so
$V_g(K)=\omega^\top K\omega$ and
$F_g(S;K)=\omega^\top Q_S(K)\omega$; equivalently,
$\Theta=\omega^\top Z$. For the occupation target
$g_c(z)=\Ind\{z>c\}$, Plackett's identity \citep{plackett1954reduction} gives
\begin{equation}
C_{g_c}(r)=G_c(r)
=\int_0^r\frac{\exp\{-c^2/(1+s)\}}{2\pi\sqrt{1-s^2}}\dd s,
\label{eq:plackett}
\end{equation}
with $G_0(r)=\arcsin(r)/(2\pi)$.

\section{Value-Specific Identification}
\label{sec:identifiability}

Section~\ref{sec:formulation} showed how to calculate protocol value when the
latent covariance $K$ is known. We now ask what can be learned when the only
available population information is the joint law of the measurements and
target under the realised protocol $A$. Throughout this section, $A$ and $B$
denote the measurement operators induced by the realised and alternative
protocols.
Sections~\ref{sec:estimation}--\ref{sec:design} return to the action-set notation
$S$, with measurement matrix $A_S$.

We begin with the linear target $\Theta=h^\top Z$. Then $(Y_A,\Theta)$ is jointly
Gaussian, so its full law is determined by second moments. We abbreviate
$\cI(B;K)=\cI_g(B;K)$ for
$g(z)=z$; in this linear-Gaussian case it also equals $\cI_{\mathrm L}(B)$.
We index grid points by $0,\dots,p-1$. Nonlinear aggregates require more than
moment matching; we treat them separately through an exact permutation argument
in \cref{prop:permutation}.

The law of $(Y_A,\Theta)$ may fail to identify $K$ even when it determines the
scalar value of the specified alternative $B$.

\subsection{Identification Criterion}
\label{sec:value-specific-identification}

For a benchmark law $\Prob_A$, define its observational equivalence class by
\begin{equation}
\mathcal K_A(\Prob_A)=\{K:\Prob_K(Y_A,\Theta)=\Prob_A\}.
\label{eq:equivalence-class}
\end{equation}
Every $K\in\mathcal K_A(\Prob_A)$ generates exactly the same distribution of
the benchmark data. No amount of additional sampling under the same protocol
$A$ can distinguish them. Consequently, for any target and model class under
consideration,
\begin{equation}
\boxed{\begin{gathered}
\cI_g(B;\cdot)\ \text{is identified from }\Prob_A\\[-1pt]
\Longleftrightarrow\quad
\cI_g(B;K_1)=\cI_g(B;K_2)
\quad\text{for all }K_1,K_2\in\mathcal K_A(\Prob_A).
\end{gathered}}
\label{eq:value-specific-criterion}
\end{equation}
This criterion is value-specific: it requires uniqueness only of the specified
alternative's value, not of every component of $K$. The class
$\mathcal K_A(\Prob_A)$ may contain many covariance structures while assigning
a unique value to $B$. Conversely, two members of this class with different
values establish non-identification. This is functional identification over an
observational-equivalence class in the sense of
\citet{koopmans1950identification}.

For the linear target $\Theta=h^\top Z$, the realised benchmark law reveals
three covariance blocks. Take $h=\omega$, and let
$Y_A=AZ+\varepsilon_A$ with $\varepsilon_A\sim\cN(0,R_A)$ and $R_A$ known.
Then $(Y_A,\Theta)$ is jointly Gaussian with covariance
\begin{equation}
\begin{pmatrix}
AKA^\top+R_A & AKh\\
h^\top KA^\top & h^\top Kh
\end{pmatrix}.
\label{eq:observables}
\end{equation}
The Gaussian benchmark law therefore depends on $K$ through the three blocks
\begin{equation}
AKA^\top,\qquad AKh,\qquad h^\top Kh,
\label{eq:three-blocks}
\end{equation}
which constitute its complete covariance information about $K$. The first
block is the covariance of the realised measurements after subtracting the
known measurement noise. The second is their cross-covariance with the target,
and the third is the target variance. Any change in $K$ that leaves all three
blocks unchanged is invisible to the benchmark.

\subsection{Minimal Stationary Counterexample}

We first ask when invisible dependence can arise in the simplest stationary
setting. A one-point benchmark reveals no lag information through
$\Var(Y_A)$, because the process is standardised. It reveals only two weighted
sums of the lag correlations: one through
$\Cov(Y_A,\Theta)$ and one through $\Var(\Theta)$. These two equations determine
all unknown lags on grids of at most three points. On a four-point grid they
leave one degree of freedom for the first time, and moving along that direction
can change the value of a protocol that observes $(Z_1,Z_2)$.

\begin{theorem}[Sharp minimal stationary counterexample]
\label{thm:minimal}
Let $p\ge2$ and let $Z$ be stationary and standardised on the grid
$\{0,1,\dots,p-1\}$ with
correlations $\rho(1),\dots,\rho(p-1)$, let $\Theta=p^{-1}\sum_j Z_j$, and let
$A$ observe $Z_0$ with known noise variance $\nu^2$. Then
\begin{enumerate}[label=(\roman*),leftmargin=2.2em,itemsep=2pt]
\item for $p\le3$ the map from $(\rho(1),\dots,\rho(p-1))$ to the observable
functionals is injective, so the stationary lag-correlation vector, and hence
$K$ within this model class, is identified;
\item for $p\ge4$ the space of stationary invisible directions has dimension
$p-3\ge1$.
\end{enumerate}
For $p=4$, the kernel is spanned by the lag perturbation
\begin{equation}
\delta=(\delta(0),\delta(1),\delta(2),\delta(3))=(0,1,-2,1).
\label{eq:delta4}
\end{equation}
For any positive-definite base correlation matrix $K$ with lag vector $\rho$,
let $K_\pm$ have lag vectors $\rho\pm\varepsilon\delta$, with lag $0$ fixed at
one. For all sufficiently small $\varepsilon>0$, these matrices are admissible
and observationally equivalent under $A$. If $B$ observes $(Z_1,Z_2)$ with
independent measurement noise $R_B=\nu_B^2 I_2$, where $\nu_B^2\ge0$, then
\begin{equation}
\cI(B;K_\pm)=\frac{2b^2}{\{1+\nu_B^2+\rho(1)\pm\varepsilon\}\,\Var(\Theta)},
\qquad
b=\frac{1+2\rho(1)+\rho(2)}{4},
\label{eq:gap-closed-form}
\end{equation}
so the two values differ whenever $1+2\rho(1)+\rho(2)\neq0$.
\end{theorem}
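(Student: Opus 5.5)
The plan is to reduce everything to the three observable blocks in \eqref{eq:three-blocks}. With $A=e_0^\top$ and $h=p^{-1}\Ind_p$, and writing $\rho(0)=1$:
\begin{itemize}[leftmargin=2.2em,itemsep=2pt]
\item The first block is $AKA^\top=K_{00}=1$, which carries no lag information.
\item The second is $AKh=p^{-1}\sum_{l=0}^{p-1}\rho(l)$.
\item The third is $h^\top Kh=p^{-2}\{p+2\sum_{l=1}^{p-1}(p-l)\rho(l)\}$.
\end{itemize}
Since the benchmark law is a mean-zero Gaussian with known $R_A=\nu^2$, it is determined by these blocks alone. On the stationary model class the observable map is therefore the affine map $\rho\mapsto(\sum_{l\ge1}\rho(l),\,\sum_{l\ge1}(p-l)\rho(l))$ from $\R^{p-1}$ to $\R^2$. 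Its linear part has coefficient rows $u=(1,\dots,1)$ and $w=(p-1,p-2,\dots,1)$.

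For (i) and (ii), I compute the rank of this linear part. When $p=2$ there is a single unknown $\rho(1)$, and it is recovered from the second block, so the map is injective. When $p\ge3$ the rows $u$ and $w$ are not proportional, because $w$ has distinct entries. The rank is then $2$, so the kernel has dimension $(p-1)-2=p-3$. This gives injectivity for $p=3$ and dimension $p-3\ge1$ for $p\ge4$. Since $K$ is a function of the lag vector within the stationary class, injectivity identifies $K$ there. For $p=4$, I check directly that $\delta=(1,-2,1)$ on lags $1,2,3$ satisfies $1-2+1=0$ and $3-4+1=0$, so it spans the one-dimensional kernel.

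For the perturbed pair, positive definiteness is an open condition and $K_\pm$ depends affinely on $\varepsilon$. Hence $K_\pm$ remain admissible correlation matrices for small $\varepsilon$: the diagonal stays at one because $\delta(0)=0$. They share all three blocks by the kernel property, so they induce the same Gaussian law of $(Y_A,\Theta)$, which is observational equivalence.

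For the value, I use \eqref{eq:linear-value}, since $\cI(B;K)=\cI_{\mathrm L}(B)$ for the linear target. The measurement covariance is $\Sigma_B=\begin{pmatrix}1+\nu_B^2&\rho_\pm(1)\\ \rho_\pm(1)&1+\nu_B^2\end{pmatrix}$ with $\rho_\pm(1)=\rho(1)\pm\varepsilon$. The cross-covariances are $\Cov(Z_1,\Theta)=\tfrac14\{\rho_\pm(1)+1+\rho_\pm(1)+\rho_\pm(2)\}$, and $\Cov(Z_2,\Theta)$ takes the same value by the symmetry of the grid around $\{1,2\}$.

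The key observation is that this common value is invariant along $\delta$. The perturbation changes $2\rho(1)+\rho(2)$ by $\pm(2\varepsilon-2\varepsilon)=0$, so $c_B=b\,\Ind_2$ with $b$ evaluated at the base $\rho$. Because $\Ind_2$ is an eigenvector of $\Sigma_B$ with eigenvalue $1+\nu_B^2+\rho(1)\pm\varepsilon$, we get $c_B^\top\Sigma_B^{-1}c_B=2b^2/\{1+\nu_B^2+\rho(1)\pm\varepsilon\}$. This eigenvalue is positive by positive definiteness of $K_\pm$ when $\nu_B=0$, and trivially positive otherwise. Dividing by $\Var(\Theta)$, which is common to $K_\pm$, gives \eqref{eq:gap-closed-form}. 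The two denominators differ by $2\varepsilon$, so the values differ exactly when $b\ne0$.

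The main obstacle I anticipate is keeping the bookkeeping straight in two places. The first is confirming that $b$ is genuinely unchanged along $\delta$. The second is justifying that the stationary Gaussian benchmark law carries no information beyond the three blocks, so that kernel directions are truly invisible. Both follow from the explicit formulas above.
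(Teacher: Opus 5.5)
Your proposal is correct and follows essentially the same route as the paper. You write down the observable functionals, read off the rank of their linear part in lag coordinates (rows proportional to $(1,\dots,1)$ and $(p-1,\dots,1)$) to get (i)--(ii) and the kernel $(1,-2,1)$ at $p=4$, use openness of positive definiteness for admissibility, and obtain \eqref{eq:gap-closed-form} from the invariance of $b$ and $\Var(\Theta)$ along $\delta$ together with the eigenvector $\Ind_2$ of $\Sigma_B$. Your separate treatment of $p=2$ and your check that $1+\nu_B^2+\rho(1)\pm\varepsilon>0$ are small points of care that the paper leaves implicit.
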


\begin{proof}[Proof idea]
The realised point and the target reveal two independent weighted sums of the
$p-1$ unknown lag correlations. Their Jacobian has rank
$\min\{2,p-1\}$, leaving $p-3$ free directions exactly when $p\ge4$; at
$p=4$, solving the two equations gives $(1,-2,1)$. Along this direction the
target variance and its covariance with $Z_1$ and $Z_2$ remain fixed, whereas
the covariance of $(Z_1,Z_2)$ changes, which yields
\eqref{eq:gap-closed-form}. The calculation is given in
Appendix~\ref{sec:app-identification}.
\end{proof}

This four-point threshold is specific to the stationary, standardised,
one-observation model with a uniform mean target. \Cref{fig:identifiability}
uses $p=4$, $h=\tfrac{1}{4}\mathbf{1}_4$, the baseline profile
$\rho_0(u)=\exp(-u)$ (that is, $\tau=1$), and the raw lag perturbation
$\varepsilon=\numIdEps$ in \eqref{eq:delta4}. Both protocols are noiseless,
with $A=e_0^\top$ and $B=(e_1,e_2)^\top$. The two Toeplitz correlation
matrices $K_\pm$ preserve $\Var(Y_A)=1$,
$\Cov(Y_A,\Theta)=\numIdCovYTheta$ and
$\Var(\Theta)=\numIdVarTheta$ to a maximum discrepancy of
\numIdDiscrepancy. Substitution into \eqref{eq:gap-closed-form} gives
$\cI(B;K_+)=\numIdCeilPlus$ and $\cI(B;K_-)=\numIdCeilMinus$.

\begin{figure}[t]
\centering
\includegraphics[width=0.72\linewidth]{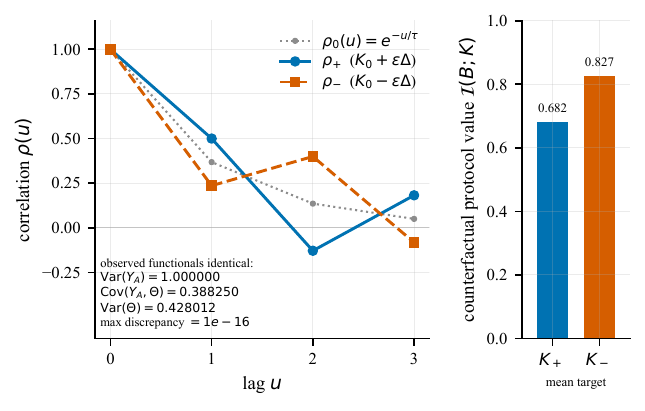}
\caption{Four-point instance of \cref{thm:minimal}. The profiles
$\rho_\pm=\rho_0\pm\varepsilon\delta$ are observationally equivalent under
$A=\{Z_0\}$ but assign values \numIdCeilPlus{} and \numIdCeilMinus{} to
$B=\{Z_1,Z_2\}$.}
\label{fig:identifiability}
\end{figure}

\FloatBarrier

\subsection{Invisible Directions}

The vector $(0,1,-2,1)$ is one instance of a more general object: a covariance
perturbation that changes latent dependence while preserving every covariance
block visible under $A$.

\begin{definition}[Invisible direction]
\label{def:invisible}
A symmetric $\Delta\in\R^{p\times p}$ with $\diag(\Delta)=0$ is
\emph{invisible} to $(A,h)$ if
\begin{equation}
A\Delta A^\top=0,\qquad A\Delta h=0,\qquad h^\top\Delta h=0 .
\label{eq:invisible}
\end{equation}
The three equalities preserve, respectively, the realised-measurement
covariance, the realised-measurement--target cross-covariance and the target
variance. The zero-diagonal requirement preserves the unit diagonal. For
positive-definite $K$, sufficiently small perturbations $K\pm\varepsilon\Delta$ remain
correlation matrices. We write $\cD(A,h)$ for the
linear space of directions invisible to $(A,h)$.
\end{definition}

\begin{theorem}[Value-changing invisible directions imply non-identification]
\label{thm:impossibility}
Let $0\neq\Delta\in\cD(A,h)$ and let $K_0\succ0$ be a correlation matrix. For
all sufficiently small $|\varepsilon|$, $K_\varepsilon=K_0+\varepsilon\Delta$
is a correlation matrix and has the same benchmark law as $K_0$; in
particular,
\begin{equation}
\Prob_{K_\varepsilon}(Y_A,\Theta)=\Prob_{K_0}(Y_A,\Theta).
\label{eq:obs-equivalent}
\end{equation}
Let $B$ be an alternative protocol with noise $R_B\succeq0$ such that
$\Sigma_B(K_0)=BK_0B^\top+R_B\succ0$, and set
$W=\Sigma_B(K_0)^{-1}$. Its directional derivative is
\begin{equation}
\begin{aligned}
D_B(\Delta;K_0)
&:=\left.\frac{\dd}{\dd\varepsilon}
\cI(B;K_0+\varepsilon\Delta)\right|_{\varepsilon=0}\\
&=\frac{2h^\top\Delta B^\top WBK_0h
-h^\top K_0B^\top W(B\Delta B^\top)WBK_0h}{h^\top K_0h}.
\end{aligned}
\label{eq:ceiling-derivative}
\end{equation}
If $D_B(\Delta;K_0)\neq0$, then the value functional $\cI(B;\cdot)$ is not
locally identified at $K_0$ within the model class: for every sufficiently
small $\varepsilon>0$, the observationally equivalent matrices
$K_0\pm\varepsilon\Delta$ assign different values to $B$.
\end{theorem}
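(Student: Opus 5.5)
The argument has three parts: admissibility of $K_\varepsilon$, observational equivalence, and a first-order expansion of the value.

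\emph{Admissibility.} Since $\diag(\Delta)=0$, the matrix $K_\varepsilon=K_0+\varepsilon\Delta$ keeps unit diagonal for every $\varepsilon$. Positive definiteness is an open condition: Weyl's inequality gives $\lambda_{\min}(K_\varepsilon)\ge\lambda_{\min}(K_0)-|\varepsilon|\,\opnorm{\Delta}$. So for $|\varepsilon|<\lambda_{\min}(K_0)/\opnorm{\Delta}$ the matrix $K_\varepsilon$ is a positive-definite correlation matrix.

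\emph{Observational equivalence.} Under $K_\varepsilon$ the pair $(Y_A,\Theta)$ is jointly Gaussian with mean zero. Its covariance is \eqref{eq:observables} with $K$ replaced by $K_\varepsilon$. The three blocks \eqref{eq:three-blocks} change by $\varepsilon A\Delta A^\top$, $\varepsilon A\Delta h$ and $\varepsilon h^\top\Delta h$. All three vanish by \eqref{eq:invisible}, while $R_A$ is unchanged. A centred Gaussian law is determined by its covariance, which gives \eqref{eq:obs-equivalent}.

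\emph{Derivative.} For the linear target, \cref{prop:risk-identity} with $C_g(r)=r$ gives
\[
\cI(B;K)=\frac{h^\top KB^\top\Sigma_B(K)^{-1}BKh}{h^\top Kh}.
\]
The denominator is constant along the path because $h^\top\Delta h=0$. Since $\Sigma_B(K_0)\succ0$ and $\Sigma_B(K_\varepsilon)=\Sigma_B(K_0)+\varepsilon B\Delta B^\top$, the matrix $\Sigma_B(K_\varepsilon)$ remains invertible for small $|\varepsilon|$. The map $\varepsilon\mapsto\cI(B;K_\varepsilon)$ is therefore a rational function of $\varepsilon$, smooth near $0$.

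I would differentiate the numerator with the product rule and the identity $\frac{\dd}{\dd\varepsilon}\Sigma^{-1}=-\Sigma^{-1}\dot\Sigma\Sigma^{-1}$, where $\dot\Sigma=B\Delta B^\top$. The two outer factors contribute $h^\top\Delta B^\top WBK_0h+h^\top K_0B^\top WB\Delta h$. These are equal scalars because $W$ and $\Delta$ are symmetric, so together they give the factor $2$. The middle factor contributes $-h^\top K_0B^\top W(B\Delta B^\top)WBK_0h$. Dividing by $h^\top K_0h$ yields \eqref{eq:ceiling-derivative}.

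\emph{Non-identification.} Suppose $D_B(\Delta;K_0)\neq0$. A first-order Taylor expansion gives
\[
\cI(B;K_{\pm\varepsilon})=\cI(B;K_0)\pm\varepsilon D_B+O(\varepsilon^2).
\]
Hence
\[
\cI(B;K_{+\varepsilon})-\cI(B;K_{-\varepsilon})=2\varepsilon D_B+O(\varepsilon^2),
\]
which is nonzero for all sufficiently small $\varepsilon>0$. Both $K_{\pm\varepsilon}$ lie in $\mathcal K_A(\Prob_{K_0})$ by the equivalence step. Criterion \eqref{eq:value-specific-criterion} then says the value is not identified. Because $\varepsilon$ can be taken arbitrarily small, it fails in every neighbourhood of $K_0$, which is local non-identification.

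The only delicate point is making "sufficiently small" explicit. This needs one common threshold for three conditions: positive definiteness of $K_\varepsilon$, invertibility of $\Sigma_B(K_\varepsilon)$, and control of the Taylor remainder. Each holds on an open interval around $0$, so taking their intersection suffices. The remainder bound follows from smoothness of a rational function on a compact subinterval. The rest is routine matrix calculus.
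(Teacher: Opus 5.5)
Your proposal is correct and follows the paper's own proof in every step. Admissibility comes from the zero diagonal and the openness of positive definiteness, and equivalence from the three preserved Gaussian covariance blocks. The derivative follows from the inverse and product rules with a denominator held constant by $h^\top\Delta h=0$, and separation follows from the $2\varepsilon D_B(\Delta;K_0)+o(\varepsilon)$ expansion. Your explicit Weyl threshold and the symmetry argument for the factor $2$ simply fill in details the paper leaves implicit.
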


\begin{proof}[Proof idea]
The three invisibility equations preserve every covariance block of the
Gaussian benchmark law. Differentiating the alternative-protocol value with
the inverse rule gives \eqref{eq:ceiling-derivative}; a non-zero derivative
then separates the two compatible paths $K_0\pm\varepsilon\Delta$ to first
order. The full matrix calculation appears in
Appendix~\ref{sec:app-identification}.
\end{proof}

Thus no estimator based only on $(Y_A,\Theta)$ can be consistent at both local
alternatives, and collecting more units under the same protocol does not
resolve the ambiguity.

If $A$ has $d$ rows, rank--nullity gives a lower bound on the dimension of the
invisible space. A standardised $K$ has $p(p-1)/2$ free entries whereas
\eqref{eq:three-blocks} supplies at most $d(d+1)/2+d+1$ linear constraints.
Rank--nullity therefore gives
\begin{equation}
\dim\cD(A,h)\ge
\frac{p(p-1)}2-\frac{d(d+1)}2-d-1,
\qquad
\text{when the right-hand side is positive.}
\label{eq:counting}
\end{equation}
Near a positive-definite $K_0$, the equivalence class contains the corresponding
open piece of the affine space $K_0+\cD(A,h)$
\citep{rothenberg1971identification}. This count only measures ambiguity in
$K$: value non-identification still requires sensitivity in
\eqref{eq:ceiling-derivative}. With fixed $d$, the contrast is nevertheless
stark---the benchmark contributes $O(d^2)$ constraints to $O(p^2)$ covariance
parameters.

\subsection{Nonlinear Targets}

The invisible-direction theorem uses joint Gaussianity of
$(Y_A,\Theta)$ and therefore applies directly to a linear target. With a
nonlinear aggregate, equality of the first two moments does not imply equality
of the full benchmark law. The following symmetry argument instead constructs
two exactly equivalent benchmark laws. The condition $A\Xi=A$ says that the
permutation leaves every realised measurement unchanged, while
$\Xi^\top\omega=\omega$ says that it preserves the aggregate weights.

\begin{proposition}[Permutation equivalence for arbitrary targets]
\label{prop:permutation}
Let $\Xi$ be a permutation matrix with $A\Xi=A$ and $\Xi^\top\omega=\omega$, and put
$K'=\Xi K\Xi^\top$. For every $g\in L^2(\phi)$ with $V_g(K)>0$, the joint laws of
$(Y_A,\Theta_g)$ under $K$ and under $K'$ coincide, and
\begin{equation}
\cI_g(B;K')=\cI_g(B\Xi;K)
\qquad\text{for every alternative protocol }B .
\label{eq:perm-transport}
\end{equation}
Consequently, whenever $\cI_g(B\Xi;K)\neq\cI_g(B;K)$, the value of $B$ is not
identified from the benchmark law.
\end{proposition}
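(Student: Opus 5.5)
The plan is a coupling argument: realise the trajectory under $K'$ as a permutation of the trajectory under $K$, and show that this relabelling leaves the benchmark measurements and the target unchanged.

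\textbf{Step 1 (coupling).} Let $Z\sim\cN(0,K)$ and define $Z'=\Xi^\top Z$. Then $\Var(Z')=\Xi^\top K\Xi$. To get the law stated in the proposition I instead set $Z'=\Xi Z$, so that $Z'\sim\cN(0,\Xi K\Xi^\top)=\cN(0,K')$. Since $\Xi$ is a permutation, $\diag(K')$ is a permutation of $\diag(K)=\Ind_p$, so $K'$ is again an admissible correlation matrix. I would check the orientation carefully here, because the hypotheses are stated as $A\Xi=A$ and $\Xi^\top\omega=\omega$, and the proof must use exactly these.

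\textbf{Step 2 (benchmark measurements).} Keep the same noise $\varepsilon_A$, which is independent of $Z$. Then $Y_A'=AZ'+\varepsilon_A=A\Xi Z+\varepsilon_A=AZ+\varepsilon_A=Y_A$ pointwise.

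\textbf{Step 3 (target).} Write $\pi$ for the permutation with $(\Xi Z)_j=Z_{\pi(j)}$, so that $\Xi_{j,\pi(j)}=1$. The condition $\Xi^\top\omega=\omega$ reads $\omega_{\pi(j)}=\omega_j$ for every $j$. Applying $g$ coordinatewise commutes with permutations, so
\[
\Theta_g'=\sum_j\omega_j\,g(Z_{\pi(j)})=\sum_j\omega_{\pi(j)}\,g(Z_{\pi(j)})=\Theta_g .
\]
Hence $(Y_A',\Theta_g')=(Y_A,\Theta_g)$ almost surely, and the two joint laws coincide for every $g\in L^2(\phi)$. The two steps carrying the content are this reindexing and the fact that $V_g(K')=V_g(K)>0$. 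The latter follows because $\Theta_g$ has the same law under both models.

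\textbf{Step 4 (transport identity).} For any $B$ with noise $R_B$ I would use the same coupling, with $Y_B'=BZ'+\varepsilon_B=(B\Xi)Z+\varepsilon_B$ and $\Theta_g'=\Theta_g$. By \cref{def:values}, the Bayes value depends only on the joint law of (measurements, target). That joint law under $(B,K')$ equals the joint law under $(B\Xi,K)$ with the same noise $R_B$. Combining this with \cref{prop:risk-identity} gives $\cI_g(B;K')=\cI_g(B\Xi;K)$.

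As an algebraic cross-check, $Q_B(K')=\Xi Q_{B\Xi}(K)\Xi^\top$, and the $\omega$-invariance then makes $F_g$ and $V_g$ agree.

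\textbf{Step 5 (conclusion).} Since $K$ and $K'$ both lie in $\mathcal K_A(\Prob_A)$, any inequality $\cI_g(B\Xi;K)\neq\cI_g(B;K)$ shows that $\cI_g(B;K')\neq\cI_g(B;K)$. The criterion \eqref{eq:value-specific-criterion} then yields non-identification of the value of $B$.
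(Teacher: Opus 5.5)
Your proposal is correct and follows the paper's proof essentially verbatim: the coupling $Z'=\Xi Z$ with shared independent noise gives $AZ'=AZ$ and $\Theta_g(Z')=(\Xi^\top\omega)^\top g(Z)=\Theta_g(Z)$ (hence $V_g(K')=V_g(K)>0$), and the same substitution applied to $B$ yields $\cI_g(B;K')=\cI_g(B\Xi;K)$. Your explicit index bookkeeping and the algebraic cross-check $Q_B(K')=\Xi Q_{B\Xi}(K)\Xi^\top$ are correct additions, but delete the false start with $Z'=\Xi^\top Z$ in Step 1 so the write-up commits to the correct orientation from the outset.
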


Here $B\Xi$ is the algebraically permuted measurement operator used to compare
values, with the same observation-noise covariance $R_B$ as $B$; it need not
belong to the original candidate family.

\begin{proof}
Let $Z\sim\cN(0,K)$ and $Z'=\Xi Z\sim\cN(0,K')$. Then $AZ'=A\Xi Z=AZ$, and since
$g$ acts entrywise $g(\Xi Z)=\Xi g(Z)$, so
$\Theta_g(Z')=\omega^\top\Xi g(Z)=(\Xi^\top\omega)^\top g(Z)=\Theta_g(Z)$; in
particular $V_g(K')=V_g(K)>0$. The measurement noise is independent with the
same law under both, so the joint laws agree. Applying the same substitution to
$B$ gives \eqref{eq:perm-transport}.
\end{proof}

Non-identification follows whenever the permutation changes the value of $B$.
For example, no change occurs when $\Xi K\Xi^\top=K$. The next construction
shows that three grid points suffice for every nonconstant target under uniform
aggregation.

\begin{example}[Three points suffice under uniform aggregation]
\label{ex:three-point}
Take $p=3$, $A=e_0^\top$, uniform $\omega$, let $\Xi$ transpose coordinates $1$
and $2$, and let $B=e_1^\top$ observe $Z_1$ with noise $\nu_B^2$, writing
$s=1+\nu_B^2$. For $0<a<1$ put
\begin{equation}
K_a=\begin{pmatrix}1&a&0\\a&1&0\\0&0&1\end{pmatrix},
\qquad K_a'=\Xi K_a\Xi^\top .
\label{eq:three-point}
\end{equation}
Both are correlation matrices, $A\Xi=A$ and $\Xi^\top\omega=\omega$, so the
benchmark laws coincide. Nevertheless,
$\cI_g(B;K_a)>\cI_g(B;K_a')$ for every nonconstant
$g\in L^2(\phi)$, every $a\in(0,1)$ and every noise level. The explicit
calculation, which reduces the gap to positive values of $C_g$, is given in
Appendix~\ref{sec:app-identification}.
\end{example}

Without stationarity, three points suffice for arbitrary nonconstant
square-integrable transformations under uniform aggregation; within the
stationary standardised model with a one-point benchmark and a mean target, the
sharp minimum is four.

\subsection{Identification by Augmentation}
\label{sec:augmentation}

Suppose $A$ has already been run and a further acquisition $A'$ is contemplated
on the same units. Let $A_{\rm aug}$ stack the rows of $A$ and $A'$. Their joint
measurement reveals the covariance within $A'$, its cross-covariance with $A$,
and the corresponding target cross-covariances. Collect the revealed linear
directions in
\begin{equation}
\bar A=\begin{bmatrix}A_{\rm aug}\\ h^\top\end{bmatrix} .
\label{eq:abar}
\end{equation}
Let
\begin{equation}
\cD(A_{\rm aug},h)=\big\{\Delta=\Delta^\top:\diag(\Delta)=0,\;
A_{\rm aug}\Delta A_{\rm aug}^\top=0,\;
A_{\rm aug}\Delta h=0,\;h^\top\Delta h=0\big\}
\label{eq:joint-invisible}
\end{equation}
be the invisible space of the augmented benchmark.

\begin{proposition}[Exact sufficiency and a first-order failure certificate]
\label{prop:augmentation}
Fix $K_0\succ0$ with $\diag(K_0)=\Ind$ and an alternative protocol $B$ with
known $R_B\succeq0$ such that $BK_0B^\top+R_B\succ0$.
\begin{enumerate}[label=(\roman*),leftmargin=2.2em,itemsep=2pt]
\item \emph{Exact sufficiency.} If $\rowspace(B)\subseteq\rowspace(\bar A)$,
then the joint Gaussian law of $(Y_B,\Theta)$, and hence $\cI(B;K)$, is
identified exactly from the augmented benchmark, for every admissible $K$.
\item \emph{First-order failure certificate.} If $D_B(\Delta;K_0)\neq0$ for
some $\Delta\in\cD(A_{\rm aug},h)$, then $\cI(B;\cdot)$ is not identified near
$K_0$: for every sufficiently small $\varepsilon>0$ the correlation matrices
$K_0\pm\varepsilon\Delta$ induce the same law of $(Y_{A_{\rm aug}},\Theta)$ and
different values of $B$.
\end{enumerate}
\end{proposition}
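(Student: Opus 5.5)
For part (i), the plan is to show that the augmented benchmark law pins down every covariance block that enters the joint Gaussian law of $(Y_B,\Theta)$. The augmented law is that of $(Y_{A_{\rm aug}},\Theta)$, which is jointly Gaussian with known noise $R_{A_{\rm aug}}$. It therefore determines the three blocks $A_{\rm aug}KA_{\rm aug}^\top$, $A_{\rm aug}Kh$ and $h^\top Kh$. Together these equal $\bar A K\bar A^\top$, so the full Gram-type matrix $M:=\bar AK\bar A^\top$ is identified. If $\rowspace(B)\subseteq\rowspace(\bar A)$, we can write $B=C\bar A$ for some fixed matrix $C$ that depends only on $B$ and $\bar A$, not on $K$; a choice is $C=B\bar A^{+}$. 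Similarly $h^\top=e_{\rm last}^\top\bar A$. Hence
\[
BKB^\top=CMC^\top,\qquad BKh=CMe_{\rm last},\qquad h^\top Kh=e_{\rm last}^\top Me_{\rm last}.
\]
Adding the known $R_B$ gives $\Sigma_B$, and the joint law of $(Y_B,\Theta)$, a mean-zero Gaussian, is identified. The value $\cI(B;K)=(BKh)^\top\Sigma_B^{-1}(BKh)/h^\top Kh$ is then a function of this law and is identified. Invertibility holds for every admissible $K$ whose $\Sigma_B$ is nonsingular; otherwise we use the Moore--Penrose form \eqref{eq:linear-value}, which still depends only on the law. The only care needed is that $C$ is $K$-free, which holds because the row-space inclusion is a purely algebraic property of $B$ and $\bar A$.

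For part (ii), the plan is to reduce to \cref{thm:impossibility}, with $A$ replaced by $A_{\rm aug}$. Its noise covariance is block-diagonal and known, since the actions have independent noise. The set $\cD(A_{\rm aug},h)$ is exactly the invisible space of \cref{def:invisible} for the pair $(A_{\rm aug},h)$. So for any $\Delta$ in it, $K_0+\varepsilon\Delta$ keeps the unit diagonal. It stays positive definite for small $|\varepsilon|$ by continuity of the smallest eigenvalue. It also leaves the three visible blocks unchanged, so the laws of $(Y_{A_{\rm aug}},\Theta)$ coincide.

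Next, consider $f(\varepsilon)=\cI(B;K_0+\varepsilon\Delta)$. This map is differentiable near $0$, because $\Sigma_B(K_0)\succ0$ persists under small perturbations and $h^\top K_\varepsilon h=h^\top K_0h>0$. Its derivative at $0$ is $D_B(\Delta;K_0)$, given by \eqref{eq:ceiling-derivative}. If this derivative is nonzero, write $f(\pm\varepsilon)=f(0)\pm\varepsilon D_B+o(\varepsilon)$. Then $f(\varepsilon)-f(-\varepsilon)=2\varepsilon D_B+o(\varepsilon)\neq0$ for all sufficiently small $\varepsilon>0$.

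The main obstacle is only the bookkeeping of the derivative formula, and that is inherited from \cref{thm:impossibility}. I would therefore cite that theorem rather than recompute it. The one point to verify explicitly is that the augmented noise is known and independent of $K$, so that the visible information is still exactly the three blocks.
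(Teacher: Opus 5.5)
Your proposal is correct and follows the paper's proof essentially step for step. In part (i) you identify $\bar AK\bar A^\top$ from the augmented law and factor $B=C\bar A$ with a $K$-free $C$, and in part (ii) you apply \cref{thm:impossibility} with realised operator $A_{\rm aug}$; your explicit choices $C=B\bar A^{+}$ and $h=\bar A^\top e_{\rm last}$, and the Moore--Penrose fallback for singular $\Sigma_B$, simply spell out details the paper leaves implicit.
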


The proof, including the row-space factorisation used in part (i), is given in
Appendix~\ref{sec:app-identification}.

By contraposition, local identification requires $D_B(\cdot;K_0)$ to vanish on
$\cD(A_{\rm aug},h)$. This condition need not be sufficient, just as the
row-space condition is sufficient but need not be necessary. Both checks are
linear algebra: compute a row-space inclusion and the kernel in
\eqref{eq:joint-invisible}, then restrict the linear functional $D_B$ to that
kernel.

The number of additional measurements required depends on the covariance
restrictions. In the four-point
stationary model of \cref{thm:minimal}, augmenting $\{Z_0\}$ with $Z_1$ reduces
the stationary invisible space from dimension $\numAugStatBefore$ to
$\numAugStatAfter$ even though the span condition for $B=\{Z_1,Z_2\}$ fails:
identification there comes from stationarity, not from
\cref{prop:augmentation}(i). Without that restriction the invisible dimensions
for $\{Z_0\}$, $\{Z_0,Z_1\}$ and $\{Z_0,Z_1,Z_2\}$ are $\numAugFreeBefore$,
$\numAugFreeOne$ and $\numAugFreeTwo$: observing $Z_1$ leaves a two-dimensional
invisible space, whereas observing both $Z_1$ and $Z_2$ removes all invisible
directions. Under stationarity, observing $Z_1$ already suffices. For an affine
structured covariance class with direction space $\cT$, the same local test
applies on $\cD(A_{\rm aug},h)\cap\cT$; stationarity is the Toeplitz instance.

A calibration protocol of full column rank satisfies the span condition and
removes every invisible direction in the unrestricted model. Full column rank
is stronger than is needed to identify one specified alternative, but it gives
a simple route to evaluating an entire candidate family. The next question is
how accurately that route works when dense calibration is available for only
$m$ units.

\section{Finite Calibration and Protocol Comparison}
\label{sec:estimation}

Calibration measurements constrain covariance directions that are invisible
under the realised protocol. We consider $m$ independent units observed densely
according to
\begin{equation}
W^{(i)}=Z^{(i)}+\eta^{(i)},\qquad
Z^{(i)}\sim\cN(0,K),\qquad \eta^{(i)}\sim\cN(0,R_0),
\qquad i=1,\dots,m.
\label{eq:calibration}
\end{equation}
The pairs $(Z^{(i)},\eta^{(i)})$ are independent across $i$, and
$\eta^{(i)}$ is independent of $Z^{(i)}$ for every $i$. Consequently,
$\Cov(W^{(i)})=K+R_0$.
The finite-sample question is how error in the covariance recovered from these
data propagates to comparisons among protocols. Appendix~\ref{sec:app-calibration}
gives the proofs, including covariance-repair and resolvent constants.

\subsection{Covariance Calibration}

Let $\widehat\Sigma=m^{-1}\sum_iW^{(i)}W^{(i)\top}$, and set
$\widehat R_0=R_0$ when the calibration-noise covariance is known;
otherwise estimate $R_0$ from an independent or replicated calibration
component. For a symmetric matrix $M$, define
\begin{equation}
\begin{aligned}
\widetilde M_\tau
&=\proj_{\{H=H^\top:\,H\succeq\tau I_p\}}(M),\\
D_\tau(M)&=\Diag\{\diag(\widetilde M_\tau)\},&
\cR_\tau(M)&=D_\tau(M)^{-1/2}\widetilde M_\tau D_\tau(M)^{-1/2}.
\end{aligned}
\label{eq:repair-operator}
\end{equation}
Thus $\cR_\tau$ floors the eigenvalues and then rescales the result to a
correlation matrix. The plug-in estimator is
\begin{equation}
\widehat K=\cR_{\tau_m}(\widehat\Sigma-\widehat R_0),
\qquad
\widehat\cI_g(S)=\frac{F_g(S;\widehat K)}{V_g(\widehat K)},
\label{eq:plugin}
\end{equation}
where $\tau_m\downarrow0$. The projection in
\eqref{eq:repair-operator} is the Frobenius-norm eigenvalue-clipping map
\citep{higham1988computing}. The asymptotic results require only
$\tau_m\downarrow0$.

The target enters the error rate through the modulus of continuity of its
Gaussian covariance transform $C_g$.

\begin{proposition}[Regularity of the target covariance transform]
\label{prop:regularity}
For each of the following cases there is a finite constant $L$ such that, on
the stated range,
\begin{equation}
|C_g(r_2)-C_g(r_1)|\le L|r_2-r_1|^\beta.
\label{eq:target-modulus}
\end{equation}
\begin{enumerate}[label=(\roman*),leftmargin=2.2em,itemsep=2pt]
\item\label{lem:lipschitz} If $g$ belongs to the first-order Gaussian Sobolev
space $W^{1,2}(\phi)$, then
\eqref{eq:target-modulus} holds on $[-1,1]$ with $\beta=1$.
\item\label{lem:holder} If $g_c(z)=\Ind\{z>c\}$, then it holds on $[-1,1]$
with $\beta=1/2$. This exponent cannot be improved uniformly, because
\begin{equation}
C_{g_c}(1)-C_{g_c}(1-\delta)
\sim \frac{e^{-c^2/2}}{2\pi}\sqrt{2\delta},
\qquad \delta\downarrow0.
\label{eq:boundary-asymptotic}
\end{equation}
\item\label{lem:local-lipschitz} For the same threshold target and every
$0\le r_{\max}<1$, \eqref{eq:target-modulus} holds on
$[-r_{\max},r_{\max}]$ with $\beta=1$.
\end{enumerate}
\end{proposition}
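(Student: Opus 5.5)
The plan is to work directly with the Hermite series $C_g(r)=\sum_{k\ge1}(a_k^2/k!)r^k$ from \eqref{eq:Cg}, together with Plackett's integral representation \eqref{eq:plackett} for the threshold case. Each part then becomes an estimate on the derivative $C_g'$.

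For part (i), differentiate the series term by term on $(-1,1)$:
\[
C_g'(r)=\sum_{k\ge1}\frac{a_k^2}{(k-1)!}\,r^{k-1}.
\]
Hence $|C_g'(r)|\le\sum_{k\ge1}k\,a_k^2/k!$. In the normalisation $g-\E g=\sum (a_k/k!)H_k$ we have $\|H_k\|^2=k!$, so $\E g^2-(\E g)^2=\sum a_k^2/k!$. Since $g'$ has Hermite coefficients $a_{k+1}$, we also get $\E (g')^2=\sum_{k\ge1}a_k^2/(k-1)!=\sum k\,a_k^2/k!$. Thus $g\in W^{1,2}(\phi)$ is exactly the condition that this series is finite. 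The derivative bound $L=\E\{g'(U)^2\}$ then holds uniformly on the open interval. Because the power series has nonnegative coefficients and converges absolutely on $[-1,1]$, $C_g$ is continuous at $\pm1$, and the mean value theorem extends the Lipschitz bound to the closed interval. Alternatively, one can bound $|r_2^k-r_1^k|\le k|r_2-r_1|$ termwise and skip the derivative entirely.

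For parts (ii) and (iii), use \eqref{eq:plackett}:
\[
C_{g_c}'(s)=\frac{e^{-c^2/(1+s)}}{2\pi\sqrt{1-s^2}}.
\]
On $[-r_{\max},r_{\max}]$ the numerator is at most $1$ and the denominator is at least $2\pi\sqrt{1-r_{\max}^2}$, which gives (iii) with $L=1/(2\pi\sqrt{1-r_{\max}^2})$. For (ii), bound the derivative by $(2\pi)^{-1}(1-s^2)^{-1/2}$, which is integrable at both endpoints. Then
\[
|C(r_2)-C(r_1)|\le(2\pi)^{-1}\bigl|\arcsin r_2-\arcsin r_1\bigr|,
\]
and it remains to show that $\arcsin$ is $1/2$-Hölder on $[-1,1]$. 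This is the main technical step. I would reduce it by symmetry and monotonicity to the worst case, an interval touching an endpoint. There, $1-s^2\ge1-s$ for $s\in[0,1]$ gives
\[
\int_{1-\delta}^1 (1-s)^{-1/2}\,ds=2\sqrt\delta .
\]
For an interval in the interior, the integral of a decreasing-then-increasing integrand over a window of length $\delta$ is maximised when the window is pushed to an endpoint. For windows straddling $0$, split the window at $0$ and use $\sqrt{x}+\sqrt{y}\le\sqrt{2(x+y)}$. Altogether this gives $L=\sqrt2/\pi$ or a similar constant.

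For the sharpness statement \eqref{eq:boundary-asymptotic}, write $C(1)-C(1-\delta)=\int_{1-\delta}^1 C'(s)\,ds$. As $s\to1$, the factor $e^{-c^2/(1+s)}\to e^{-c^2/2}$ and $\sqrt{1-s^2}=\sqrt{(1-s)(1+s)}\sim\sqrt{2(1-s)}$. Therefore the integral is
\[
\frac{e^{-c^2/2}}{2\pi\sqrt2}\cdot2\sqrt\delta\,(1+o(1))=\frac{e^{-c^2/2}}{2\pi}\sqrt{2\delta}\,(1+o(1)).
\]
Since $\sqrt\delta/\delta^{\beta}\to\infty$ for every $\beta>1/2$, no larger exponent can hold uniformly on $[-1,1]$.
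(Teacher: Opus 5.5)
Your proposal is correct and follows the paper's own route: the Hermite series with the Gaussian Sobolev identity $\sum_k k a_k^2/k!=\E\{g'(U)^2\}$ for (i), and Plackett's derivative $C_{g_c}'(s)=e^{-c^2/(1+s)}/\{2\pi\sqrt{1-s^2}\}$ for (ii), (iii) and the boundary asymptotic. The only differences are in the constants. The paper cites the sharp inequality $|\arcsin b-\arcsin a|\le(\pi/\sqrt2)|b-a|^{1/2}$ to get the attained constant $L=1/(2\sqrt2)$, whereas your hand-proved H\"older bound gives the cruder $L=\sqrt2/\pi$ and in (iii) you drop the factor $e^{-c^2/(1+r_{\max})}$; this is harmless, since the statement only requires a finite $L$.
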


Smooth targets therefore transmit covariance error linearly. Threshold targets
do so linearly at a fixed model whose relevant correlations are separated from
$\pm1$, but only through a square-root envelope when boundary correlations are
allowed.

\subsection{Uniform Value Error}

We now propagate covariance-calibration error through the protocol-value
calculation,
$\widehat K-K\mapsto Q_S(\widehat K)-Q_S(K)\mapsto
\widehat\cI_g(S)-\cI_g(S)$. Two quantities can destabilise this chain and
therefore appear explicitly in the assumptions below: a vanishing target
variance and nearly singular protocol covariances.

\begin{theorem}[Uniform stability of protocol values]
\label{thm:uniform-error}
Let $V_g(K)\ge v_0>0$, and suppose that for a feasible family $\Pi_{\mathcal B}$,
\begin{equation}
\sup_{\substack{S\in\Pi_{\mathcal B}\\S\ne\varnothing}}
\opnorm{A_S}^2\le a<\infty,
\qquad
\inf_{\substack{S\in\Pi_{\mathcal B}\\S\ne\varnothing}}
\lambda_{\min}(A_SKA_S^\top+R_S)\ge\lambda>0.
\label{eq:family-conditioning}
\end{equation}
If the modulus \eqref{eq:target-modulus} with exponent $\beta$ is valid for all
arguments of $C_g$ arising from entries of $K$, $\widehat K$, $Q_S(K)$ and
$Q_S(\widehat K)$, then, for all sufficiently small
$e=\opnorm{\widehat K-K}$,
\begin{equation}
\sup_{S\in\Pi_{\mathcal B}}
\big|\widehat\cI_g(S)-\cI_g(S)\big|\le C e^\beta.
\label{eq:uniform-error}
\end{equation}
Here $C$ depends only on the target modulus, $v_0$, bounds on $K$ and
$\widehat K$, and the family bounds $(a,\lambda)$, not on $S$.
Consequently $\beta=1$ for smooth targets, $\beta=1/2$ globally for threshold
targets, and $\beta=1$ for threshold targets at an interior model.
\end{theorem}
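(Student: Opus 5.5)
The argument follows the chain $\widehat K-K\mapsto Q_S(\widehat K)-Q_S(K)\mapsto F_g,V_g\mapsto\widehat\cI_g(S)$ and bounds each link by constants that depend on the family only through $(a,\lambda)$.

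\textbf{Step 1: perturbation of $Q_S$.} Fix a non-empty $S$, write $A=A_S$, $\Sigma=AKA^\top+R_S$ and $\widehat\Sigma=A\widehat KA^\top+R_S$. Then $\opnorm{\widehat\Sigma-\Sigma}\le a e$. By Weyl's inequality, $\lambda_{\min}(\widehat\Sigma)\ge\lambda-ae\ge\lambda/2$ once $e\le\lambda/(2a)$; this is the ``sufficiently small $e$'' condition. The resolvent identity $\widehat\Sigma^{-1}-\Sigma^{-1}=-\widehat\Sigma^{-1}(\widehat\Sigma-\Sigma)\Sigma^{-1}$ gives $\opnorm{\widehat\Sigma^{-1}-\Sigma^{-1}}\le 2ae/\lambda^2$. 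Expand $Q_S(\widehat K)-Q_S(K)$ telescopically in its three factors $K A^\top$, $\Sigma^{-1}$ and $AK$. Each term carries either one factor $\widehat K-K$ or one resolvent difference, with the remaining factors bounded by $\opnorm{K},\opnorm{\widehat K}\le\kappa$, $\opnorm{A}\le\sqrt a$ and $\opnorm{\Sigma^{-1}},\opnorm{\widehat\Sigma^{-1}}\le2/\lambda$. This yields
\[
\opnorm{Q_S(\widehat K)-Q_S(K)}\le C_1 e,\qquad C_1=C_1(a,\lambda,\kappa),
\]
uniformly in $S$. The empty protocol is trivial, since $Q_\varnothing=0$ and $F_g=0$ under both $K$ and $\widehat K$.

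\textbf{Step 2: from matrices to $F_g$ and $V_g$.} Entrywise, $|M_{jk}|\le\opnorm{M}$. By hypothesis all the relevant entries lie in the range where the modulus \eqref{eq:target-modulus} holds, so
\[
|C_g(\widehat K_{jk})-C_g(K_{jk})|\le L e^\beta,\qquad |C_g(Q_S(\widehat K)_{jk})-C_g(Q_S(K)_{jk})|\le L(C_1e)^\beta.
\]
Because $\omega\ge0$ and $\Ind^\top\omega=1$, we have $\sum_{j,k}\omega_j\omega_k=1$. Hence $|V_g(\widehat K)-V_g(K)|\le Le^\beta$ and $|F_g(S;\widehat K)-F_g(S;K)|\le LC_1^\beta e^\beta$, with no dependence on $p$.

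\textbf{Step 3: the ratio.} For small $e$ we have $V_g(\widehat K)\ge v_0/2$. Write
\[
\frac{\widehat F}{\widehat V}-\frac FV=\frac{\widehat F-F}{\widehat V}+F\,\frac{V-\widehat V}{V\widehat V}.
\]
\Cref{prop:risk-identity} gives $0\le F\le V$, so $F/V\le1$ and the second term is at most $|V-\widehat V|/\widehat V$. Combining the bounds gives $\sup_S|\widehat\cI_g(S)-\cI_g(S)|\le (2L/v_0)(1+C_1^\beta)e^\beta$. The three regimes of the final sentence then follow from \cref{prop:regularity}(i)--(iii). For case (iii), the entries stay in $[-r_{\max},r_{\max}]$ for small $e$ if we choose $r_{\max}$ strictly between the largest off-diagonal correlation of $K$ and $1$. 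The diagonal entries equal $1$ for both $K$ and $\widehat K$ and do not vary.

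\textbf{Main obstacle.} The delicate point is the diagonal of $Q_S$, whose entries can approach $1$. Under the interior regime (iii) these entries still vary with $e$, so the modulus must apply there too. I would first handle this by the stated hypothesis, which requires the modulus to hold for all arguments arising. If the hypothesis is read as covering only off-diagonal entries, I would fall back on the global $\beta=1/2$ envelope from \cref{prop:regularity}(ii).
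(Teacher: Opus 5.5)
Your proposal is correct and follows the paper's proof essentially step for step: the same Weyl and resolvent bounds ($2/\lambda$ and $2ae/\lambda^2$), the same three-term expansion giving a uniform Lipschitz constant for $Q_S$ (the paper's $L_Q=4\kappa a/\lambda+2\kappa^2a^2/\lambda^2$), the same entrywise transfer using $\sum_{j,k}\omega_j\omega_k=1$, and the same ratio decomposition with $0\le F_g\le V_g$, ending at $2L(L_Q^\beta+1)e^\beta/v_0$. Your ``main obstacle'' is resolved by the paper's interior condition \eqref{eq:r0}. That condition bounds $\max_{j,k}|Q_S(K)_{jk}|$, diagonal included, away from $1$ uniformly over the family, so your bound $\opnorm{Q_S(\widehat K)-Q_S(K)}\le C_1e$ keeps every varying argument in a common compact subinterval of $(-1,1)$, and no fallback to $\beta=1/2$ is needed. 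Choosing $r_{\max}$ from the off-diagonal entries of $K$ alone, as you do, would not suffice, because entries of $Q_S$ can exceed those correlations.
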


For a threshold target, the interior condition invoked in the theorem can be
checked at the fixed model through
\begin{equation}
r_0=\max\left\{\max_{j\ne k}|K_{jk}|,\ 
\sup_{S\in\Pi_{\mathcal B}}\max_{j,k}|Q_S(K)_{jk}|\right\}<1.
\label{eq:r0}
\end{equation}
Uniform continuity of the resolvent then places the corresponding arguments
for every sufficiently close $\widehat K$ in a common compact subset of
$(-1,1)$.

\subsection{Calibration Rates}

\begin{theorem}[Fixed-model calibration rates]
\label{thm:fixed-model}
Fix $p$ and a finite feasible family $\Pi_{\mathcal B}$, and suppose that
$K\succ0$, $V_g(K)>0$ and the conditioning bound in
\eqref{eq:family-conditioning} holds. If
$\opnorm{\widehat R_0-R_0}=O_p(m^{-1/2})$ and $\tau_m\downarrow0$, then the
estimator in \eqref{eq:plugin} satisfies
$\opnorm{\widehat K-K}=O_p(m^{-1/2})$. Hence
\begin{equation}
\sup_{S\in\Pi_{\mathcal B}}|\widehat\cI_g(S)-\cI_g(S)|=
\begin{cases}
O_p(m^{-1/2}), & g\in W^{1,2}(\phi),\\
O_p(m^{-1/2}), & g=g_c\text{ and the interior condition \eqref{eq:r0} holds},\\
O_p(m^{-1/4}), & g=g_c\text{ under the global threshold envelope}.
\end{cases}
\label{eq:root-m}
\end{equation}
\end{theorem}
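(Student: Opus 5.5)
The plan has two parts. First, establish $\opnorm{\widehat K-K}=O_p(m^{-1/2})$. Second, feed this into \cref{thm:uniform-error}, checking that its hypotheses hold with probability tending to one.

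For the covariance rate, fixed $p$ and Gaussian $W^{(i)}$ with finite fourth moments give $\opnorm{\widehat\Sigma-(K+R_0)}=O_p(m^{-1/2})$ by the multivariate CLT (or Chebyshev applied entrywise). Together with the assumed rate for $\widehat R_0$, this yields $M_m:=\widehat\Sigma-\widehat R_0=K+O_p(m^{-1/2})$.

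Next, control the repair map $\cR_\tau$.
\begin{enumerate}[label=(\alph*),leftmargin=2.2em,itemsep=2pt]
\item Since $K\succ0$ and $\tau_m\downarrow0$, eventually $\tau_m<\lambda_{\min}(K)/2$. By Weyl's inequality, on the event $\opnorm{M_m-K}<\lambda_{\min}(K)/2$ (probability $\to1$), every eigenvalue of $M_m$ exceeds $\tau_m$. The clipping projection is then the identity, so $\widetilde M_{\tau_m}=M_m$.
\item The diagonal of $M_m$ is within $O_p(m^{-1/2})$ of $\Ind_p$. The map $M\mapsto D^{-1/2}MD^{-1/2}$ is smooth, hence locally Lipschitz, near $K$ and fixes $K$. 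It follows that $\opnorm{\widehat K-K}=O_p(m^{-1/2})$.
\end{enumerate}
If one prefers not to rely on the clipping being inactive, the projection onto a closed convex set is $1$-Lipschitz in Frobenius norm. Combined with $\proj(K)=K$ once $\tau_m\le\lambda_{\min}(K)$, this gives the same bound.

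For the value rates, apply \cref{thm:uniform-error} on the event $E_m=\{\opnorm{\widehat K-K}\le e_0\}$, which has probability tending to one. The constant $C$ there depends on bounds for $\widehat K$, and these hold uniformly on $E_m$: $\opnorm{\widehat K}\le\opnorm{K}+e_0$. The three target classes are handled as follows.
\begin{enumerate}[label=(\roman*),leftmargin=2.2em,itemsep=2pt]
\item For $g\in W^{1,2}(\phi)$, \cref{prop:regularity}(i) gives $\beta=1$ on $[-1,1]$. All arguments of $C_g$ are correlations or entries of $Q_S$ bounded in absolute value by one, so the result is $O_p(m^{-1/2})$.
\item For the global threshold envelope, \cref{prop:regularity}(ii) gives $\beta=1/2$ on $[-1,1]$, hence $O_p(m^{-1/4})$.
\item For the interior threshold case, use \eqref{eq:r0} together with finiteness of $\Pi_{\mathcal B}$ and continuity of $K\mapsto Q_S(K)$ on the region where $\Sigma_S\succ0$. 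Shrinking $e_0$ if needed, all arguments lie in $[-r_1,r_1]$ with $r_1=(1+r_0)/2<1$. \Cref{prop:regularity}(iii) then gives $\beta=1$, hence $O_p(m^{-1/2})$.
\end{enumerate}

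Two side conditions must be verified on $E_m$. First, $V_g(\widehat K)$ stays bounded below by $v_0/2$, because $V_g$ is continuous. Second, for each of the finitely many $S$, $\lambda_{\min}(A_S\widehat KA_S^\top+R_S)\ge\lambda-a\,e_0\ge\lambda/2$; this follows from $\opnorm{A_S(\widehat K-K)A_S^\top}\le a\opnorm{\widehat K-K}$, where $a$ is finite because the family is finite.

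The main obstacle is bookkeeping rather than a single hard estimate. It consists of showing that the repair operator is locally Lipschitz at $K$ despite the varying floor $\tau_m$, and of ensuring that the constant from \cref{thm:uniform-error} is uniform on a high-probability event rather than random. Both reduce to the observation that $K$ is an interior point where the clipping is inactive and every map in the chain is smooth.
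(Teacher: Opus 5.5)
Your proposal is correct and follows essentially the paper's own route. The shared steps are fixed-$p$ covariance concentration, Weyl's inequality to show that the eigenvalue floor is inactive with probability tending to one, and substitution into the uniform-error bound with conditioning and correlation neighbourhoods chosen uniformly over the finite family; where you appeal to local smoothness of the diagonal rescaling, the paper records the explicit bound $(6\opnorm{K}+4)$ times the raw moment error. The only imprecision is in case (iii): the diagonal arguments $K_{jj}=\widehat K_{jj}=1$ of $C_g$ in $V_g$ do not lie in $[-r_1,r_1]$, but they cancel because both matrices have unit diagonal, which is exactly the observation the paper uses.
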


The global $m^{-1/4}$ rate is a worst-case guarantee over correlation matrices;
the ordinary root-$m$ rate applies to every fixed threshold model satisfying
\eqref{eq:r0}.

\subsection{Selection Regret and Distinguishable Value Gaps}

Uniform value error translates directly into a decision guarantee.

\begin{corollary}[Selection regret and distinguishable value gaps]
\label{cor:regret}
Let
$\varepsilon_m=\sup_{S\in\Pi_{\mathcal B}}|\widehat\cI_g(S)-\cI_g(S)|$, let $S^*$
maximise $\cI_g$ over $\Pi_{\mathcal B}$, and let $\widetilde S$ be a feasible
protocol with empirical optimisation gap
\begin{equation}
\eta_m=\max_{S\in\Pi_{\mathcal B}}\widehat\cI_g(S)
-\widehat\cI_g(\widetilde S).
\label{eq:opt-gap}
\end{equation}
Then
\begin{equation}
\cI_g(S^*)-\cI_g(\widetilde S)\le2\varepsilon_m+\eta_m.
\label{eq:regret-opt}
\end{equation}
Exact empirical maximisation is the special case $\eta_m=0$.

For nested classes
$\Pi^{(1)}\subseteq\cdots\subseteq\Pi^{(L)}=\Pi_{\mathcal B}$, let
$\widehat S_\ell$ maximise $\widehat\cI_g$ over $\Pi^{(\ell)}$. On the event
$\sup_{S\in\Pi^{(\ell)}}|\widehat\cI_g(S)-\cI_g(S)|\le\varepsilon_\ell$,
\begin{equation}
\cI_g(S^*)-\cI_g(\widehat S_\ell)\le
\underbrace{\cI_g(S^*)-
\max_{S\in\Pi^{(\ell)}}\cI_g(S)}_{\text{class restriction}}
+\underbrace{2\varepsilon_\ell}_{\text{calibration error}}.
\label{eq:resolution-bound}
\end{equation}
\end{corollary}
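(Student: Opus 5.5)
The argument is the standard empirical-risk-minimisation decomposition: insert the estimated values between the true values of the two protocols being compared, and control each substitution by the uniform error $\varepsilon_m$. No structure of $\cI_g$ beyond the definitions of $\varepsilon_m$ and $\eta_m$ is needed, so the result holds deterministically. It then applies on whatever event \cref{thm:uniform-error} or \cref{thm:fixed-model} supplies.

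For \eqref{eq:regret-opt}, write
\[
\cI_g(S^*)-\cI_g(\widetilde S)
=\{\cI_g(S^*)-\widehat\cI_g(S^*)\}
+\{\widehat\cI_g(S^*)-\widehat\cI_g(\widetilde S)\}
+\{\widehat\cI_g(\widetilde S)-\cI_g(\widetilde S)\}.
\]
The first and third brackets are each at most $\varepsilon_m$, because $S^*$ and $\widetilde S$ both belong to $\Pi_{\mathcal B}$ and $\varepsilon_m$ is a supremum over that family. The middle bracket is at most $\max_{S\in\Pi_{\mathcal B}}\widehat\cI_g(S)-\widehat\cI_g(\widetilde S)=\eta_m$. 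Summing gives $2\varepsilon_m+\eta_m$. Exact empirical maximisation means $\widetilde S$ attains the empirical maximum, so $\eta_m=0$.

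For \eqref{eq:resolution-bound}, let $S_\ell^*$ maximise $\cI_g$ over $\Pi^{(\ell)}$. A maximiser exists if the class is finite. If the class is not finite, I would take a near-maximiser and then let the slack go to zero.

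Split the regret as
\[
\cI_g(S^*)-\cI_g(\widehat S_\ell)
=\{\cI_g(S^*)-\cI_g(S_\ell^*)\}+\{\cI_g(S_\ell^*)-\cI_g(\widehat S_\ell)\}.
\]
The first term is exactly the class-restriction term. For the second term, apply the first part of the corollary with $\Pi^{(\ell)}$ in place of $\Pi_{\mathcal B}$, comparator $S_\ell^*$, and $\eta=0$, on the stated event. This bounds it by $2\varepsilon_\ell$.

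The only point that needs care is that this second step uses the uniform bound only over $\Pi^{(\ell)}$, which is why the event is restricted to that class. The proof is otherwise routine, so I do not expect a genuine obstacle.
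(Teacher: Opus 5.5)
Your proposal is correct and follows the paper's own argument: the same three-term decomposition for \eqref{eq:regret-opt}, and the same reuse of that argument within $\Pi^{(\ell)}$ (with $\eta=0$ and comparator the class-restricted optimum) to obtain \eqref{eq:resolution-bound}. Your remark about near-maximisers is unnecessary here, since $\Pi_{\mathcal B}$ comes from a finite catalogue $\cV$ and every $\Pi^{(\ell)}$ is therefore finite, but it does no harm.
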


An estimated gap larger than $2\varepsilon_m$ certifies the same ordering at
the population level, whereas a population gap larger than $2\varepsilon_m$
cannot be reversed by the plug-in estimates. Hence protocol-value differences
below this scale cannot be distinguished uniformly from calibration error.
Equation~\eqref{eq:resolution-bound} separates this uncertainty from the
approximation loss incurred by restricting the search to $\Pi^{(\ell)}$.

\theoremstyle{remark}
\newtheorem*{bestlinearremark}{Remark}
\begin{bestlinearremark}[Best-linear calibration]
The same perturbation and regret arguments apply to $\cI_{\mathrm L}$. If the
protocol covariances are uniformly nonsingular and their covariance and
target-cross-covariance blocks, together with the target variance, are
estimated uniformly at the root-$m$ rate, then
$\sup_{S\in\Pi_{\mathcal B}}|\widehat\cI_{\mathrm L}(S)-\cI_{\mathrm L}(S)|
=O_p(m^{-1/2})$. Appendix~\ref{sec:app-best-linear} states sufficient
conditions and proves this claim; \cref{cor:regret} then applies with
$\cI_g$ replaced by $\cI_{\mathrm L}$.
\end{bestlinearremark}

\section{Target-Aware Observation Design}
\label{sec:design}

Once candidate values have been identified and estimated accurately enough to
distinguish them, the remaining task is to choose a feasible protocol. Because
the target variance $V_g(K)$ does not depend on the protocol, maximising the
normalised value $\cI_g(S;K)=F_g(S;K)/V_g(K)$ is equivalent to maximising
$F_g(S;K)$.
Exact rank-one marginal gains support forward selection with one-swap
refinement, and exhaustive search measures optimisation error when the
catalogue is small enough. Appendix~\ref{sec:app-design} gives the derivations,
algorithmic details and comparator formulas.

\subsection{Feasible Protocols and Design Objective}

For design, an acquisition action is represented by
\begin{equation}
a=(\ell_a,\;r_a,\;c_a),
\label{eq:action}
\end{equation}
where $\ell_a\in\R^p$ specifies the linear measurement, $r_a\ge0$ is its
effective noise variance, and $c_a>0$ is its cost. The observation is
\begin{equation}
Y_{i,a}=\ell_a^\top Z_i+\varepsilon_{i,a},
\qquad \varepsilon_{i,a}\sim\cN(0,r_a),
\label{eq:observation}
\end{equation}
with noises independent across actions and independent of $Z_i$. Timing and
support are encoded in $\ell_a$: a point measurement uses a coordinate vector,
whereas a window average uses normalised weights over its support. Repetition
is encoded through $r_a$; for example, averaging $M_a$ independent measurements
with per-measurement variance $\nu_a^2$ gives $r_a=\nu_a^2/M_a$. The cost may
include all repetitions and other acquisition burdens.

For a finite catalogue $\cV$ and cost budget $\mathcal B$, the feasible family
is
\begin{equation}
\Pi_{\mathcal B}=\Big\{S\subseteq\cV:\
\textstyle\sum_{a\in S}c_a\le\mathcal B\Big\}.
\label{eq:feasible-family}
\end{equation}
Application-specific restrictions, such as choosing at most one precision
variant on a given support, can be imposed on the catalogue or by taking a
subfamily of $\Pi_{\mathcal B}$. The results below apply to any such subfamily.
We restrict attention to feasible protocols for which every non-empty
$\Sigma_S(K)$ is nonsingular; this is automatic when every selected action has
$r_a>0$.

At the population level, observation design is therefore the optimisation problem
\begin{equation}
\max_{S\in\Pi_{\mathcal B}}\;F_g(S;K),
\label{eq:design-problem}
\end{equation}
where the candidate set $\cV$ ranges over acquisition times, window lengths,
repetition counts and noise levels. Given calibration data, the empirical
version replaces $K$ by $\widehat K$ and maximises $F_g(S;\widehat K)$;
\cref{cor:regret} bounds what that substitution costs.

\subsection{Exact Marginal Gains}

Suppressing the argument $K$ in $Q_S(K)$, write
\begin{equation}
P_S:=K-Q_S(K)=\Cov(Z\mid Y_S).
\label{eq:residual-covariance}
\end{equation}
Thus $v_{a\mid S}=P_S\ell_a$ is the covariance between a new measurement and
the residual trajectory, and $s_{a\mid S}$ is its conditional variance,
including measurement noise.

\begin{proposition}[Rank-one marginal gains]
\label{prop:marginal}
Let $S=\varnothing$, or assume that $\Sigma_S(K)$ is nonsingular. Let
$a\notin S$ satisfy
$S\cup\{a\}\in\Pi_{\mathcal B}$ and $s_{a\mid S}>0$, and put
\begin{equation}
v_{a\mid S}=P_S\ell_a,
\qquad
s_{a\mid S}=\ell_a^\top P_S\ell_a+r_a.
\label{eq:vs}
\end{equation}
Then $Q_{S\cup\{a\}}=Q_S+v_{a\mid S}v_{a\mid S}^\top/s_{a\mid S}$ and
$P_{S\cup\{a\}}=P_S-v_{a\mid S}v_{a\mid S}^\top/s_{a\mid S}$, so that the exact
marginal gain is
\begin{equation}
\Delta_g(a\mid S)
=\sum_{j,k}\omega_j\omega_k
\left[C_g\!\left\{Q_{S,jk}+\frac{v_{a\mid S,j}v_{a\mid S,k}}{s_{a\mid S}}\right\}
-C_g(Q_{S,jk})\right].
\label{eq:marginal-nonlinear}
\end{equation}
For the mean target this collapses to the closed form
\begin{equation}
\Delta_{\mathrm{mean}}(a\mid S)
=\frac{(\omega^\top P_S\ell_a)^2}{\ell_a^\top P_S\ell_a+r_a}.
\label{eq:marginal-mean}
\end{equation}
\end{proposition}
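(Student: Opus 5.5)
The plan is to obtain the rank-one update of $Q$ from Gaussian sequential conditioning and then push it through the covariance transform of \cref{prop:risk-identity}. Work with $Z\sim\cN(0,K)$ and the jointly Gaussian vector $(Z,Y_S,Y_a)$, where $Y_a=\ell_a^\top Z+\varepsilon_a$ has noise $\varepsilon_a$ independent of $(Z,\varepsilon_S)$.

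The first step uses $P_S=\Cov(Z\mid Y_S)$ from \eqref{eq:residual-covariance}. This identification is valid because $\Sigma_S(K)$ is nonsingular and \eqref{eq:QP} gives $K-Q_S(K)$ as the Gaussian conditional covariance; for $S=\varnothing$, convention \eqref{eq:empty-protocol} gives $P_\varnothing=K$. Conditionally on $Y_S$, the pair $(Z,Y_a)$ is Gaussian with covariance
\[
\begin{pmatrix}P_S & P_S\ell_a\\ \ell_a^\top P_S & \ell_a^\top P_S\ell_a+r_a\end{pmatrix}.
\]
This holds because $\varepsilon_a$ is independent of $(Z,Y_S)$, so it contributes only $r_a$ to the conditional variance and nothing to the cross term. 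Conditioning further on $Y_a$, with $s_{a\mid S}>0$, gives the Schur complement
\[
\Cov(Z\mid Y_S,Y_a)=P_S-v_{a\mid S}v_{a\mid S}^\top/s_{a\mid S}.
\]
Since conditioning on $(Y_S,Y_a)$ is conditioning on $Y_{S\cup\{a\}}$, this equals $P_{S\cup\{a\}}$.

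Next, the law of total covariance gives $Q_{S\cup\{a\}}=K-P_{S\cup\{a\}}$, which yields the stated update of $Q$. I need to check one technical point: that $Q_{S\cup\{a\}}$ defined by \eqref{eq:QP} agrees with $\Cov\{\E(Z\mid Y_{S\cup\{a\}})\}$, i.e.\ that $\Sigma_{S\cup\{a\}}(K)$ is nonsingular. This follows from the block determinant $\det\Sigma_{S\cup\{a\}}=\det\Sigma_S\cdot s_{a\mid S}>0$, because $s_{a\mid S}$ is exactly the Schur complement of $\Sigma_S$ in $\Sigma_{S\cup\{a\}}$. Equivalently, one can verify the update algebraically by block inversion of $\Sigma_{S\cup\{a\}}$. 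This is the step I expect to need the most care; the rest is bookkeeping.

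Then \eqref{eq:marginal-nonlinear} follows by substituting the updated $Q$ into $F_g$ from \eqref{eq:VF}. The marginal gain is defined as $F_g(S\cup\{a\};K)-F_g(S;K)$, and the $S=\varnothing$ case is covered by $Q_\varnothing=0$.

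For the mean target, $C_g(r)=r$, so the sum is linear in its argument. The gain is therefore
\[
\omega^\top v_{a\mid S}v_{a\mid S}^\top\omega/s_{a\mid S}=(\omega^\top P_S\ell_a)^2/(\ell_a^\top P_S\ell_a+r_a),
\]
which is \eqref{eq:marginal-mean}.
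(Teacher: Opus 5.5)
Your argument is correct, but it takes a different route from the paper.

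The paper's proof is purely algebraic. It writes $\Sigma_{S\cup\{a\}}$ in block form with $M=A_SKA_S^\top+R_S$, $b=A_SK\ell_a$ and $d=\ell_a^\top K\ell_a+r_a$. It observes that the Schur complement $d-b^\top M^{-1}b$ equals $s_{a\mid S}$ and inserts the block-inverse formula directly into the definition \eqref{eq:QP}. Collecting the correction then gives an outer product of $K\ell_a-KA_S^\top M^{-1}b=P_S\ell_a$.

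You instead obtain the update of $P$ first, by sequential Gaussian conditioning: a one-step Kalman update applied to the law of $(Z,Y_a)$ given $Y_S$. You then recover $Q$ through the law of total covariance.

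Your route makes the rank-one structure and the meaning of $v_{a\mid S}$ and $s_{a\mid S}$ transparent, as residual cross-covariance and conditional variance. The cost is that you must check separately that the matrix formula \eqref{eq:QP} for $S\cup\{a\}$ agrees with $\Cov\{\E(Z\mid Y_{S\cup\{a\}})\}$. You handle this correctly through $\det\Sigma_{S\cup\{a\}}=\det\Sigma_S\cdot s_{a\mid S}>0$.

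The paper's computation works with \eqref{eq:QP} throughout, so it never needs that probabilistic identification. Invertibility of the augmented matrix comes for free from the block-inverse formula once $s_{a\mid S}>0$.

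Both arguments turn on the same Schur complement. The alternative you mention, verifying the update by block inversion of $\Sigma_{S\cup\{a\}}$, is exactly the paper's proof.
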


\Cref{eq:marginal-nonlinear} is the exact objective increment rather than a
surrogate. The rank-one form lets search update the residual covariance without
re-forming the full observation-covariance inverse.

The marginal-gain objective is monotone under nested protocols, although it
need not exhibit diminishing returns.
\begin{lemma}[Monotonicity under nested protocols]
\label{prop:monotone}
For every $g\in L^2(\phi)$, every correlation matrix $K$ with nonsingular
measurement covariance for each non-empty protocol concerned, and every
$S\subseteq S'$,
$F_g(S;K)\le F_g(S';K)$.
\end{lemma}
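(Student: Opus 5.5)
The most direct route is through the probabilistic meaning of $F_g$ given in \cref{prop:risk-identity}, namely $F_g(S;K)=\Var\{\E(\Theta_g\mid Y_S)\}$, rather than through the matrix formula. Three cases need to be separated. If $S=\varnothing$, then $F_g(\varnothing;K)=0$ by \eqref{eq:empty-protocol}, and every $F_g(S';K)$ is a variance of a conditional expectation, hence non-negative. That settles this case. Degenerate targets with $V_g(K)=0$ make $\Theta_g$ almost surely constant, so both sides vanish; I would either note this or check that the identity for $F_g$ holds without the positivity assumption. Positivity only enters through the ratio, and the Hermite computation of $\Var\{\E(\Theta_g\mid Y_S)\}$ does not use it.

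For nonempty $S\subseteq S'$, I would realise both protocols on a single probability space. Draw $Z\sim\cN(0,K)$ and independent action noises $\varepsilon_a\sim\cN(0,r_a)$ for $a\in S'$. Since the noises are independent across actions, $Y_S$ is a coordinate subvector of $Y_{S'}$. So $\sigma(Y_S)\subseteq\sigma(Y_{S'})$, and the joint law of $(Y_S,\Theta_g)$ is exactly the one used for protocol $S$ alone. The tower property then gives $\E(\Theta_g\mid Y_S)=\E\{\E(\Theta_g\mid Y_{S'})\mid Y_S\}$. Conditional expectation is an $L^2$ contraction (Jensen), and both sides have mean $\E\Theta_g$. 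Therefore
\[
\Var\{\E(\Theta_g\mid Y_S)\}\le\Var\{\E(\Theta_g\mid Y_{S'})\},
\]
which is $F_g(S;K)\le F_g(S';K)$. Here $\Theta_g\in L^2$ because $g\in L^2(\phi)$ and the marginals are standard normal.

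The step that needs the most care is making sure the identification $F_g(S;K)=\Var\{\E(\Theta_g\mid Y_S)\}$ from \cref{prop:risk-identity} applies to both protocols under the lemma's hypotheses. Nonsingularity of $\Sigma_S(K)$ and $\Sigma_{S'}(K)$ is assumed, so \eqref{eq:QP} and the proposition apply to each of them. No diminishing-returns or matrix monotonicity argument is needed.

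As a cross-check, there is a purely algebraic alternative. Add the actions of $S'\setminus S$ one at a time and use \cref{prop:marginal}: each step adds a positive semidefinite rank-one term to $Q$. One would then argue that $Q\mapsto\sum\omega_j\omega_k C_g(Q_{jk})$ is monotone in the Loewner order on the relevant set. That holds because $C_g$ has non-negative Taylor coefficients $a_k^2/k!$, so by the Schur product theorem $C_g(Q'_{jk})-C_g(Q_{jk})=\sum_k (a_k^2/k!)\{(Q')^{\circ k}-Q^{\circ k}\}$ is positive semidefinite whenever $Q'\succeq Q\succeq0$. This uses $(Q')^{\circ k}\succeq Q^{\circ k}$, obtained by telescoping Hadamard products. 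I would present the probabilistic proof as the main argument and mention this one only as a remark.
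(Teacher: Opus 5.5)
Your main argument is correct and is the paper's own proof. Nesting $S\subseteq S'$ nests the observation $\sigma$-fields, and the $L^2$-contraction of conditional expectation (tower property, equal means) gives $\Var\{\E(\Theta_g\mid Y_S)\}\le\Var\{\E(\Theta_g\mid Y_{S'})\}$, which is $F_g(S;K)\le F_g(S';K)$ by \cref{prop:risk-identity}; your handling of the empty protocol and the $V_g(K)>0$ hypothesis, and your Schur-product remark (entrywise series of Hadamard powers with non-negative coefficients, applied to $Q_{S'}\succeq Q_S\succeq0$), are sound details that the paper's one-line proof leaves implicit.
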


Adding measurements cannot decrease $F_g$. Monotonicity, however, does not
imply diminishing returns: even the mean target reduces to $R^2$ subset
selection, which is not submodular in general \citep{das2018submodular}.

For noiseless point measurements and a linear integral target, the objective
reduces to kernel quadrature
\citep{bach2017equivalence,briol2019probabilistic}. Measurement noise and
nonlinear targets place the general problem outside that noiseless formulation.

\subsection{Search and Comparators}
\label{sec:algorithms}

Starting from $S=\varnothing$ and $P_S=K$, forward selection repeatedly adds
the feasible action with largest exact gain in
\eqref{eq:marginal-nonlinear}, using gain per unit cost when costs differ, and
then applies the rank-one update. A one-swap refinement makes feasible
selected--unselected exchanges whenever the exchange improves the objective.
Because monotonicity does not imply submodularity, exhaustive search is used as an
optimisation benchmark when the catalogue is small enough. Pseudocode and
operation counts are given in Appendix~\ref{sec:app-design-search}.

To isolate which information drives a schedule, we compare the target-aware
objective with several alternatives. Latent-state
mutual information maximises $I(Z;Y_S)$ and is target-free. Integrated
posterior variance minimises $\sum_j\omega_j(P_S)_{jj}$, so it uses temporal
weights but not the target transformation. Linear-target design retains the
actual weights and noise but sets $g(z)=z$, thereby maximising
$\omega^\top Q_S(K)\omega$. Noiseless kernel quadrature uses the same linear
criterion with $R_S=0$. These comparisons separate target nonlinearity from
the noise model and from target-free coverage criteria; their exact objectives
and marginal gains are recorded in Appendix~\ref{sec:app-design-comparators}.

\section{Empirical Evaluation}
\label{sec:synthetic}

\subsection{Experimental Overview}
\label{sec:empirical-overview}
The empirical evaluation addresses three questions that follow from the calibration and design results in Sections 4--5. First, with only finitely many dense calibration trajectories, how accurately can protocol values be estimated across a candidate family, and at what resolution can competing protocols be reliably distinguished and selected? Second, once those values are estimable, does designing measurements for the predictive value of the specified target improve on target-free or target-mismatched objectives, and how closely can practical search approach the exact target-specific optimum? Third, in fully annotated temporal data, are broad matched-budget differences in observation layout reproducible, and is there sufficient information to support the finer, data-dependent choice of exact observation locations?

The first two questions are studied under known data-generating laws. The
calibration experiments compare plug-in values with the exact population values
$\cI_g(S;K)$ and evaluate family-uniform error, selection regret and nested
candidate classes. The design experiments enumerate every feasible protocol in
moderate catalogues, so the target-specific optimum and the optimisation error
of forward selection with one-swap refinement are known exactly. The third
question is studied retrospectively in Sleep-EDF and Long-Term AF by
reconstructing partial observation protocols from complete annotations. Because
population protocol values are not observed in these data, performance is
measured by pooled held-out cross-fitted $R^2$; support selection and predictor
tuning are confined to the corresponding outer training folds. Simulation
specifications are given in Appendix~\ref{sec:appendix-simulations}, and the
real-data processing, cross-fitting and resampling procedures are given in
Appendix~\ref{sec:appendix-real}.

\subsection{Finite Calibration and Protocol Comparison}
\label{sec:simulation-evidence}
\label{sec:calibration-regret}

\Cref{fig:calibration} examines three consequences of finite calibration.
Panel (a) asks how accurately protocol values can be estimated over an entire
candidate family. Panel (b) asks whether the remaining estimation error is
large enough to affect selection of a near-optimal protocol. Panels (c,d)
then examine how calibration accuracy limits the useful granularity of the
candidate family. In all experiments, protocol values estimated from the
calibrated covariance $\widehat K$ are compared with their exact population
values under the known generating covariance. Full simulation specifications
are given in Appendix~D.

Panel (a) evaluates the family-uniform error
$\varepsilon_m=\max_{S\in\Pi_{\mathcal B}}
|\widehat{\cI}_g(S)-\cI_g(S;K)|$,
the largest protocol-value error within the candidate family.
This error decreases steadily as the number of calibration trajectories
increases. The full-range log--log slopes are \numSlopeMean{} and
\numSlopeOccZero; over the three largest calibration sizes they are
\numTailSlopeMean{} and \numTailSlopeOccZero, approaching the fixed-model
exponent $-1/2$ derived in \cref{sec:estimation}.

In this experiment, panel (b) shows that selecting a near-optimal protocol can
require substantially less accuracy than estimating every protocol value
uniformly. Across the six covariance--target cells, the mean cellwise log--log
slopes are \numRegretSlope{} for realised selection regret and
\numEpsSlope{} for the uniform-error envelope. The median and maximum
ratios of regret to this envelope are \numRegretRatioMedian{} and
\numRegretRatioMax. The difference reflects the two criteria being controlled:
$\varepsilon_m$ is determined by the largest estimation error anywhere in the
family, whereas near-optimal selection depends primarily on correctly ordering
the leading candidates.

\begin{figure}[H]
\centering
\includegraphics[width=0.82\linewidth]{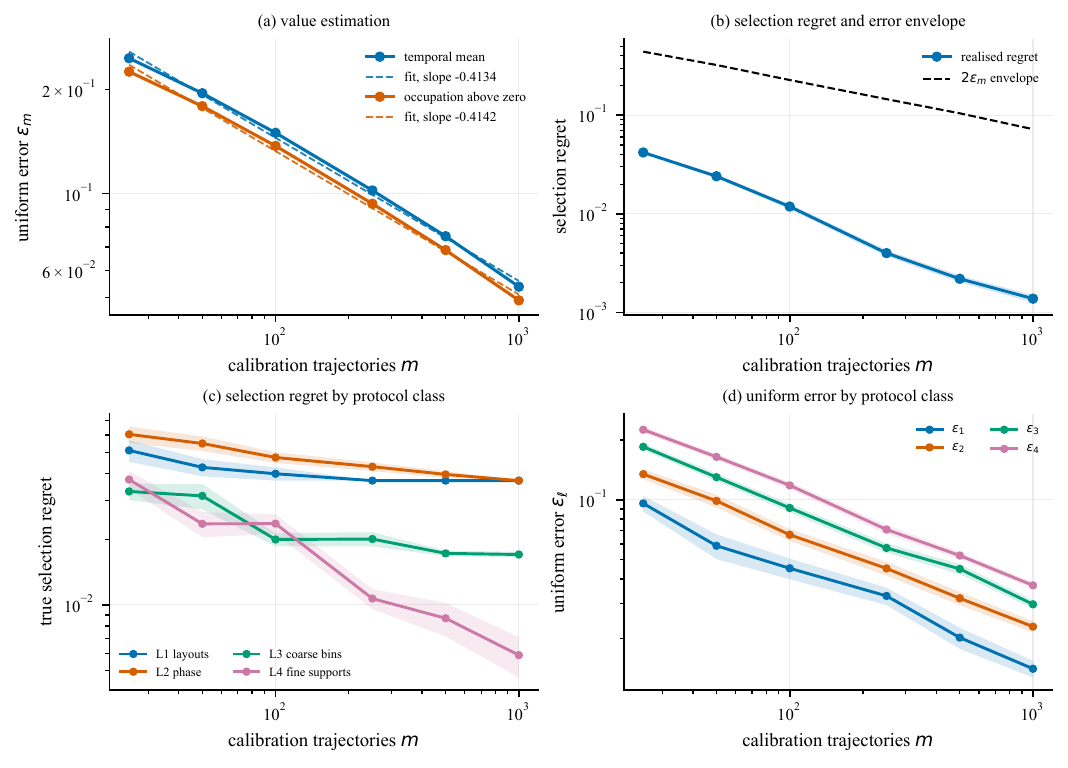}
\caption{Finite-calibration error, selection regret and candidate-family
resolution.
(a) Family-uniform protocol-value error $\varepsilon_m$ for the temporal-mean
and occupation-at-zero targets over all \numEstFamily{} four-action protocols
under a stationary OU Gaussian model on a $p=128$ grid.
(b) True regret of the protocol selected from estimated values, together with
the $2\varepsilon_m$ envelope from \cref{sec:estimation}, for four-action
selection from a 14-action catalogue, averaged over two covariance models and
three targets: the temporal mean and occupation above thresholds zero and one.
(c,d) Selection regret and family-uniform error for four nested candidate
families of increasing granularity, with sizes
\numResClassSizeOne, \numResClassSizeTwo, \numResClassSizeThree{} and
\numResClassSizeFour, under a nonstationary $p=64$ covariance.
Calibration sizes are
$m\in\{25,50,100,250,500,1000\}$.
Panels (a,b) use \numEstReplications{} replications and panels (c,d) use 30;
curves show replication means and bands show standard errors.}
\label{fig:calibration}
\end{figure}

Panels (c,d) show how this distinction affects the useful resolution of the
candidate family. At $m=25$, the finest class has selection regret
\numResRegretFineSmall, compared with \numResRegretCoarseSmall{} for the
coarsest class, despite a family-uniform error of \numResEpsFourSmall.
Thus, a richer candidate class can yield a better selected protocol even
before all of its values are estimated accurately. By $m=1000$, the uniform
error of the finest class falls to \numResEpsFourLarge, whereas the coarsest
class still incurs restriction regret \numResRegretCoarseLarge{} because
better protocols lie outside that class.

Together, the four panels distinguish uniform estimation resolution from
selection resolution. Reliable selection does not require every candidate
value to be estimated to the same precision; it requires sufficient
calibration to resolve the value differences among the candidates that compete
near the optimum. Candidate-family refinement can therefore be useful before
uniform estimation over the refined family becomes accurate.

\subsection{Target-Aware Design and Search}
\label{sec:target-design-experiment}

\Cref{tab:s5} examines two questions about observation design. First, how much
is lost when protocols are chosen using target-free or target-mismatched
objectives rather than the predictive value of the specified target? Second,
when the target-aware objective is used, how closely does the proposed search
procedure approach the exact optimum?

We evaluate \numDesignComparableInstances{} instances formed by five aggregate
targets and five covariance--action settings: stationary OU, stationary
Mat\'ern, horizon-varying covariance, recency-weighted prediction and
heterogeneous actions. The first four settings select four of 12 point actions;
the heterogeneous setting chooses among 18 actions under a cost budget of four.
Every feasible protocol is enumerated, providing the exact target-specific
optimum against which both alternative design objectives and approximate search
can be evaluated. Full simulation settings are given in Appendix~D.

For a method returning protocol $S$, we report relative efficiency
$\cI_g(S)/\cI_g(S^*)$, where $S^*$ is the exhaustive optimum for the target
being evaluated. An efficiency of one therefore indicates that the selected
protocol attains the target-specific optimum. Latent-state mutual information
and integrated posterior variance provide target-free comparators. The
linear-target comparator retains the actual temporal weights and measurement
noise but replaces the target transformation by $g(z)=z$, whereas kernel
quadrature uses the corresponding conventional noiseless integration
criterion.

\begin{table}[H]
\centering\small
\begin{tabular}{@{}lrrr@{}}
\toprule
Method & minimum & mean & median \\
\midrule
Latent-state mutual information & $\numDesignMethodMinMI$ & $\numDesignMethodMeanMI$ & $\numDesignMethodMedianMI$ \\
Integrated posterior variance & $\numDesignMethodMinIMSE$ & $\numDesignMethodMeanIMSE$ & $\numDesignMethodMedianIMSE$ \\
Actual-noise linear target & $\numDesignMethodMinLinear$ & $\numDesignMethodMeanLinear$ & $\numDesignMethodMedianLinear$ \\
Noiseless kernel quadrature & $\numDesignMethodMinKQ$ & $\numDesignMethodMeanKQ$ & $\numDesignMethodMedianKQ$ \\
Target-aware greedy & $\numDesignMethodMinGreedy$ & $\numDesignMethodMeanGreedy$ & $\numDesignMethodMedianGreedy$ \\
Target-aware greedy plus one-swap & $\numDesignMethodMinSwap$ & $\numDesignMethodMeanSwap$ & $\numDesignMethodMedianSwap$ \\
\bottomrule
\end{tabular}
\caption{Relative efficiency of alternative design objectives and search
procedures against the exhaustive target-specific optimum over
\numDesignComparableInstances{} kernel--action--target instances.}
\label{tab:s5}
\end{table}
\FloatBarrier

The choice of design objective matters most when temporal symmetry is broken.
In the stationary, uniformly weighted settings, supports optimised for
different targets transfer with little loss. Under nonstationarity,
non-uniform temporal weighting or heterogeneous actions, the target-specific
optimal supports become more distinct; the lowest cross-target efficiency in
the heterogeneous setting is \numDesignCrossHetero. The target-free
comparators show the same general distinction: good coverage of the latent
trajectory does not necessarily yield a protocol that is optimal for the
specified aggregate target.

The proposed search procedure is nevertheless close to the exhaustive
target-aware optimum. One-swap refinement raises the minimum efficiency in the
heterogeneous setting from \numDesignGreedyMinHetero{} to
\numDesignAwareMinHetero{} and reaches the exhaustive optimum there and in the
stationary OU and horizon-varying settings. In the remaining settings its
minimum efficiencies are \numDesignAwareMinMatern{} and
\numDesignAwareMinRecency, while evaluating only
\numDesignEvalFracHetero--\numDesignEvalFracStat{} of the exhaustive sets.

\subsection{Retrospective Protocol Comparison}
\label{sec:real}

The retrospective analyses use two fully annotated temporal data sets to
reconstruct partial observation protocols while computing the prediction target
from the complete analysable record. 

Sleep-EDF Expanded
\citep{kemp2000analysis,mourtazaev1995age,goldberger2000physiobank,
pollard2026physionet} contains 30-second sleep-stage annotations from the
Sleep Cassette (SC) and Sleep Telemetry (ST) studies. The analysed sample
contains \numSleepRecordings{} recordings from \numSleepSubjects{} subjects and
\numSleepHours{} hours of valid annotations. Stages 3 and 4 are combined as N3
\citep{rechtschaffen1968manual,iber2007aasm}. Each recording is mapped to
$p=\numSleepGrid$ relative-time anchors. The targets are the exact full-interval
REM, N3 and Wake proportions. The Long-Term AF Database
\citep{petrutiu2007abrupt,goldberger2000physiobank} contains
\numLtafRecords{} records and \numLtafHours{} hours of reviewed rhythm
annotations, with median analysable record length \numLtafMedianHours{} hours.
Analysis begins at the first rhythm marker, excluding any unannotated prefix.
Each record is mapped to $p=\numAfGrid$ equal relative-time bins, and the target
is the exact fraction of analysable annotated time spent in atrial fibrillation.

Sleep protocols expose stage-specific binary indicators at selected
relative-time anchors, with budgets $N\in\{4,8,16,32,64\}$ counting distinct
scored 30-second epochs. AF protocols expose bin-level AF fractions, with
budgets $N\in\{1,2,4,8,16,32\}$ observing the fraction $N/p$ of each analysable
record. Both analyses use five outer folds and report $R^2$ from pooled held-out
predictions. Sleep folds are subject-disjoint and stratified by source study,
with subject-balanced evaluation weights. Study, valid-duration and treatment
covariates enter the Sleep predictors unpenalised. AF uses record-level folds
and equal record weights because repeated-subject identifiers are unavailable.

At each budget, we compare a centred contiguous block with observations
dispersed as uniformly as possible over relative time. Sleep additionally
learns target-aware and kernel-quadrature supports within each outer training
fold. The target-aware criterion maximises a regularised best-linear estimate
of the fraction of residual target variance explained by the selected anchors.
The kernel-quadrature comparator uses the same repaired temporal covariance
without target outcomes. Support selection, ridge tuning and predictor fitting
are completed within the outer training fold. For fixed-template contrasts, conditional 2.5th--97.5th percentile ranges are
obtained by resampling held-out subjects for Sleep and records for AF while
keeping the fitted predictors fixed. Further details of the common-grid
mapping, weighting, support selection and resampling are given in
Appendix~\ref{sec:appendix-real}.

For REM, \cref{fig:real}(a) shows cross-fitted $R^2$ values of
\numCfContigFour, \numCfContigSixteen{} and \numCfContigSixtyFour{} for the
centred-contiguous template at $N=4,16,64$, compared with
\numCfUniformFour, \numCfUniformSixteen{} and \numCfUniformSixtyFour{} for
uniform dispersion. Across the 15 Sleep target--budget cells, the
dispersed-minus-contiguous percentile range has a positive lower endpoint in
\numSleepAdjustedPositiveRanges{} cells. The pooled Sleep results favour dispersion at several budgets, but the
source-specific contrasts are heterogeneous. In separate SC and ST analyses,
\numSleepCohortPositiveRanges{} of \numSleepCohortCells{}
source--target--budget ranges lie above zero,
\numSleepCohortNegativeRanges{} lie below zero and
\numSleepCohortUnresolvedRanges{} include zero. Additional analyses are
reported in Appendix~\ref{sec:appendix-real-sensitivity}.

For AF, at $N=4$ each template observes \numAfObservedPctFour{} of the
analysable record, equivalent to \numAfEquivalentHoursFour{} hour on a 24-hour
record. The contiguous and dispersed templates attain cross-fitted $R^2$
values \numCfAfContigOneHour{} and \numCfAfDispOneHour, respectively, with
difference \numAfBootDiffFour{} and conditional paired percentile range
\numAfBootRangeFour. At $N=16$, the corresponding values are
\numCfAfContigFourHour{} and \numCfAfDispFourHour, with difference
\numAfBootDiffSixteen{} and range \numAfBootRangeSixteen. The paired percentile range has a positive lower endpoint at every AF budget
containing at least two windows. At $N=1$, the difference is
\numAfBootDiffOne{} with range \numAfBootRangeOne. Because only one window is
observed, this comparison reflects window location rather than dispersion.
Outcome-defined cohort and grid-discretisation analyses are reported in
Appendix~\ref{sec:appendix-real-sensitivity}.

\FloatBarrier
\begin{figure}[htbp]
\centering
\includegraphics[width=0.86\linewidth]{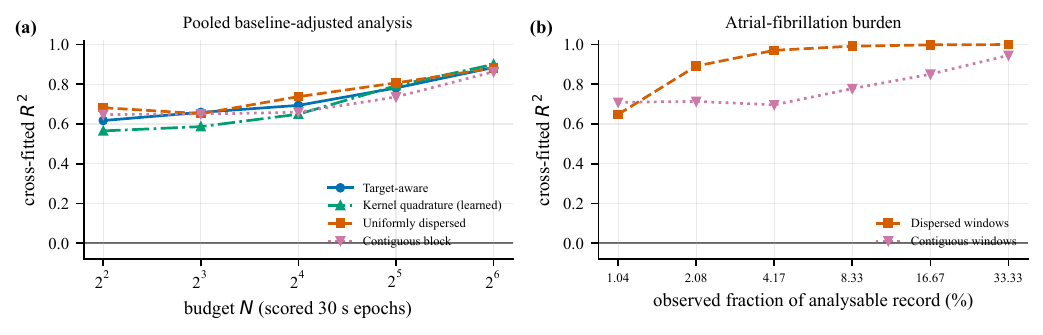}
\caption{Held-out prediction from reconstructed Sleep and AF observation
protocols. (a) Pooled baseline-adjusted cross-fitted $R^2$ for the Sleep
targets under target-aware, learned kernel-quadrature, uniformly dispersed and
contiguous supports. (b) Pooled cross-fitted $R^2$ for AF burden under
dispersed and contiguous window layouts.}
\label{fig:real}
\end{figure}
\FloatBarrier

We next examine the stability of the learned Sleep supports. At $N=16$, the
target-aware REM support attains baseline-adjusted cross-fitted
$R^2=\numCfAwareSixteen$, compared with \numCfKqSixteen{} for learned kernel
quadrature and \numCfUniformSixteen{} for fixed dispersion. The fold-averaged
pairwise Jaccard overlap among the REM-, N3- and Wake-specific supports is
\numSleepJaccardMin--\numSleepJaccardMax. The fitted supports therefore differ
substantially across targets, although this difference may also reflect
finite-sample variation in support selection.

\Cref{fig:sweep}(a) varies the number of subjects used for support selection
from \numSweepMSmall{} to the complete outer-training-fold sample while keeping
predictor fitting on the full training fold. The target-aware difference changes
from \numSweepDeltaKqSmall{} to \numSweepDeltaKqFull{} relative to learned
kernel quadrature and from \numSweepDeltaUniSmall{} to
\numSweepDeltaUniFull{} relative to fixed dispersion. Panel (b) repeats the complete selection and evaluation pipeline on
\numSelectionSubsampleReps{} study-stratified subsamples containing
\numSelectionSubsamplePct{} of subjects. The target-aware difference relative
to kernel quadrature has median \numSelectionDeltaKqMed{} and 2.5th--97.5th
percentile range
$[\numSelectionDeltaKqPlo,\numSelectionDeltaKqPhi]$. Relative to fixed
dispersion, the corresponding values are \numSelectionDeltaUniMed{} and
$[\numSelectionDeltaUniPlo,\numSelectionDeltaUniPhi]$. Both ranges include
zero.

\begin{figure}[htbp]
\centering
\includegraphics[width=0.84\linewidth]{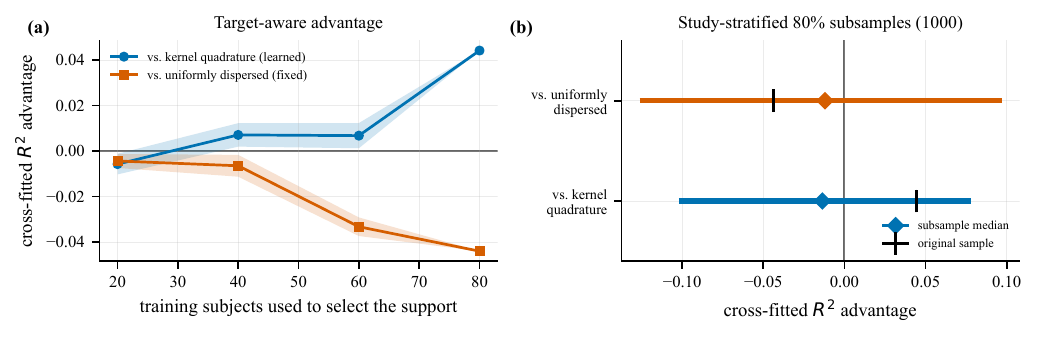}
\caption{Stability of the learned REM support at $N=16$.
(a) Target-aware held-out advantage as the support-selection sample increases.
(b) Target-aware advantage over learned kernel quadrature and fixed dispersion
across \numSelectionSubsampleReps{} study-stratified
\numSelectionSubsamplePct{} subject subsamples.}
\label{fig:sweep}
\end{figure}
\FloatBarrier

The real-data results distinguish broad temporal-layout contrasts more clearly
than exact learned support locations. Dispersion performs better than contiguous
observation in several pooled Sleep comparisons and across the multi-window AF
budgets, although the Sleep contrasts vary between source studies. The learned
Sleep supports are less stable across targets and subject subsamples, and their
paired held-out $R^2$ differences include zero. This agrees with the resolution
pattern in \cref{fig:calibration}: coarse protocol differences can be easier to
distinguish than fine support locations with finite data.

\section{Related Work}
\label{sec:related}

Counterfactual protocol evaluation intersects four literatures: learning from
incomplete temporal records, measurement design under an assumed process
model, evaluation across acquisition regimes, and statistical identification
of observation schemes. The main distinctions are whether the measurements
required by an alternative occur in the observed data and whether its value is
computable under a specified model or must be identified from the realised
measurement--target law.

\subsection{Learning from Incomplete Temporal Trajectories}
\label{sec:rw-incomplete}

Functional data analysis estimates population mean and covariance functions
from sparse or irregular measurements under sampling and smoothness
assumptions
\citep{ramsay2005functional,yao2005sparse,hall2006properties,li2010uniform}.
Longitudinal methods also allow observation times to depend on latent or
previously observed outcomes
\citep{pullenayegum2016irregular,weaver2023informative}. These estimators pool
within-subject pairs across units, so recovery of a covariance region requires
the sampling design to supply informative pairs there, together with the
assumptions imposed by the estimator. Under a single fixed protocol, times or
supports unique to an alternative may occur for no unit. The corresponding
covariance blocks are then not determined by the measurements alone; in the
linear-Gaussian analysis, the aggregate target adds the cross-covariance and
variance constraints characterised in Section~\ref{sec:identifiability}.

Machine-learning methods for irregular time series instead learn prediction or
interpolation rules directly from incomplete records. GRU-D models missingness
indicators and elapsed times, whereas latent ODEs and neural controlled
differential equations represent irregular observations in continuous time
\citep{che2018grud,rubanova2019latent,kidger2020neural,shukla2020survey}.
Early-prediction and active feature acquisition methods determine when an
observed prefix is sufficient or which feature to acquire next
\citep{xing2009early,shim2018joint}. Multiple-instance and label-proportion
methods also use aggregate labels, but retain a fixed observation scheme
\citep{dietterich1997multiple,quadrianto2009labels,zhou2018brief}. These methods
operate on observation patterns represented in the data. By contrast, an
alternative protocol here may require measurements absent for every unit in
the realised sample; its joint law with the target is therefore not available
as a marginal of the realised law.

\subsection{Measurement Design under an Assumed Model}
\label{sec:rw-design}

Classical optimal design allocates measurements to estimate parameters or
linear functionals efficiently under a statistical model
\citep{elfving1952optimum,pukelsheim2006optimal}. Bayesian and goal-oriented
design optimise expected information or posterior uncertainty about a
quantity of interest under a model or prior
\citep{lindley1956information,chaloner1995bayesian,attia2018goal,
alexanderian2021optimal,rainforth2024modern}. Active learning, sensor
selection, Gaussian-process sensor placement, and kernel quadrature likewise
optimise uncertainty, information gain, estimation error, or integration
accuracy under a fitted model, covariance, or kernel
\citep{mackay1992information,settles2012active,joshi2009sensor,
krause2008near,bach2017equivalence,briol2019probabilistic}.

Optimal design for longitudinal and functional data is especially close to the
present temporal setting: pilot data can estimate dependence and guide future
observation times, including designs for predicting a scalar response
\citep{ji2017optimal}. Such procedures ask which measurements to collect once
a model makes the criterion computable. Counterfactual protocol evaluation
asks the preceding question: whether the value in
Definition~\ref{def:values} is determined by the realised law at all. Once the
required dependence has been identified or calibrated, Section~\ref{sec:design}
becomes a target-aware sensor-selection problem. Its objective is the
population $R^2$ of the Bayes-optimal predictor of a fixed temporal aggregate,
not trajectory-reconstruction error or target-free information gain. For
noiseless point measurements and a linear aggregate it coincides with a
kernel-quadrature criterion; measurement noise and nonlinear aggregates move
beyond that case.

\subsection{Evaluation across Acquisition Regimes}
\label{sec:rw-evaluation}

Active feature acquisition performance evaluation (AFAPE) evaluates a new
acquisition policy from retrospective data generated by another acquisition
process \citep{vonkleist2025evaluation}. Depending on its assumptions, AFAPE
uses missing-data or offline-policy-evaluation methods, including direct,
inverse-probability-weighted, and doubly robust estimators. More generally,
off-policy evaluation requires overlap or related support conditions between
the behaviour and target regimes
\citep{precup2000eligibility,kallus2020double,sachdeva2020deficient}.

A fixed observation protocol can be represented as a deterministic acquisition
policy, but an alternative considered here may assign positive demand to
measurements having zero acquisition probability under the realised protocol.
Reweighting observed acquisition trajectories cannot identify their
distribution or predictive value. Identification must instead come from
restrictions linking observed and unobserved components of the same latent
trajectory to the aggregate target. Theorem~\ref{thm:minimal} and
Theorem~\ref{thm:impossibility} show that these links need not determine the
alternative's value, whereas Proposition~\ref{prop:augmentation} gives
conditions under which additional measurements remove the relevant ambiguity.
Graphical missing-data theory provides a related account of recoverability from
systematically incomplete observations
\citep{mohan2021graphical,nabi2020full}. Here, however, the acquisition rule is
fixed and known; the unresolved object is the latent temporal dependence
between measurements that were and were not collected.

\subsection{Identification, Comparison, and Predictive Limits of Observation Schemes}
\label{sec:rw-identification}

Statistical identification asks whether a functional is constant over all
latent models inducing the same observed law; partial identification retains
the resulting set of compatible values
\citep{koopmans1950identification,rothenberg1971identification,
manski2003partial,tamer2010partial}. We apply this principle to the predictive
value of an undeployed protocol. Definition~\ref{def:invisible} exposes the
observational-equivalence geometry in the linear-Gaussian model,
Theorem~\ref{thm:impossibility} gives a local non-identification certificate,
and Proposition~\ref{prop:permutation} gives an exact construction for
nonlinear aggregates. Proposition~\ref{prop:augmentation} conversely shows that
a specified protocol value can be identified without recovering the complete
latent covariance.

Bayes-error analyses characterise predictive limits for a fixed observed
feature--target law \citep{fukunaga1987bayes,ishida2023performance}. The present
question is whether that law determines the predictive limit under a different
feature-generating protocol. Blackwell's comparison of experiments and its
extensions order observation schemes across classes of decision problems
\citep{blackwell1953equivalent,lecam1990asymptotics,torgersen1991comparison};
protocol value is instead a scalar criterion for one fixed target and loss.
Transportability and data-fusion methods infer quantities outside a realised
regime through causal or invariance assumptions across populations or data
sources \citep{pearl2014external,bareinboim2016fusion}. Here the population and
target remain fixed, and the regime change concerns which measurements of the
same latent trajectory are collected.

The Gaussian identity used to evaluate a fixed protocol in
Proposition~\ref{prop:risk-identity} was derived by
\citet{zhang2026label}. The present work studies the identification of an
undeployed protocol's value, measurement augmentation that restores it, and
finite-calibration comparison and design.

\section{Discussion}
The observation protocol is part of a predictive system, not merely a fixed
preprocessing choice. When the target summarises a longer trajectory than the
measurements supplied to the learner, performance reflects both the learner
and the information admitted by the protocol. More training units and more
flexible predictors can reduce error under the realised protocol, but they do
not reveal the value of measurements that were never collected. Even the
population measurement--target law can leave dependence unresolved that is
decisive for an alternative protocol.

The practical meaning of ``more data'' therefore depends on how those data are
collected. If the ambiguity is invisible to the realised protocol, enlarging
the sample under that protocol reproduces the same constraints. A smaller
sample observed more extensively may be more useful. Targeted augmentation
removes ambiguity affecting one specified alternative, whereas dense
calibration supports a broader family. This motivates a large routine cohort
accompanied by a smaller, intensively observed calibration subset chosen with
future protocol decisions in mind.

Finite calibration determines how finely protocols can be compared. The
precision of an optimiser's output should not be confused with the precision
supported by the data. \Cref{fig:calibration} shows that richer candidate
classes can reduce selection regret before all candidate values are estimated
accurately, because selection depends mainly on ordering protocols near the
optimum. Useful granularity is set by the value gaps among competitive
protocols and the information available to resolve them.

The retrospective analyses illustrate this distinction. In Sleep, broad
dispersed-versus-contiguous contrasts are more reproducible than exact learned
supports, whose locations and held-out advantages vary across targets, source
studies and subsamples (\cref{fig:real,fig:sweep}). In AF, distributing a
matched budget across the record is substantially more predictive of
full-record burden than concentrating it in one block at the multi-window
budgets considered. This is not a general rule in favour of dispersion; equal
observation time need not carry equal information for a target defined over a
longer horizon. Protocol quality is also target-dependent: \cref{tab:s5}
shows that observing the process well and observing it well for a specified
aggregate can lead to different designs.

The same problem arises beyond temporal sampling. Short ECG recordings may not
reveal the cross-day dependence needed to evaluate dispersed monitoring; an
overlapping multimodal subset may be needed before imaging, proteomic or
wearable panels can be valued; and current environmental sensors may not reveal
the spatial dependence governing the value of new locations. These are not
ordinary feature-selection problems, because the proposed measurements may
never have been observed jointly with the target. Calibration precedes the
usual optimisation problem in longitudinal and sensor design
\citep{chaloner1995bayesian,ji2017optimal,krause2008near,joshi2009sensor}.

When point identification fails, the range of compatible values can still
support partial comparisons and robust design based on worst-case value or
minimax regret \citep{manski2003partial,tamer2010partial}. The Gaussian analysis
exposes this ambiguity through covariance directions, but the principle is
broader: an alternative is evaluable whenever its value is constant over the
latent laws compatible with the realised data. In more general models, the
corresponding task is to characterise that compatible set and collect
measurements that shrink the decision-relevant value range.

\section{Conclusion}
Counterfactual protocol evaluation asks a question prior to observation design: whether data collected under one protocol determine what could be achieved under another. The results show that this can fail even with unlimited data under the realised protocol, and that resolving the failure requires new kinds of observation rather than more samples of the same kind. Finite calibration then sets the scale at which candidate protocols can be compared reliably. Observation design should therefore be optimised only after its value is identified, and only at a resolution supported by the available data. Although developed for temporal protocols, the same principle applies to alternative measurement systems more generally.

\section*{Code and Data Availability}
Code and data are available at
\url{https://github.com/zxzok/cpv-explainer}. An interactive explainer is
available at \url{https://cpv.xizhe.net}.
\acks{The author declares no competing interests.}

\appendix
\section{Proofs for Value-Specific Identification}
\label{sec:app-identification}

\subsection{Gaussian Protocol-Value Identity}

\Cref{prop:risk-identity} is due to \citet{zhang2026label}. Draw posterior
replicates $Z^{(1)},Z^{(2)}$ independently given $Y_S$; their
cross-covariance is $Q_S(K)$. The Gaussian covariance identity and conditional
independence give
$\Cov\{\E[g(Z_j)\mid Y_S],\E[g(Z_k)\mid Y_S]\}=C_g\{Q_S(K)_{jk}\}$.
Weighted summation yields $F_g(S;K)$, while total variance gives $V_g(K)$ and
minimum Bayes error $V_g(K)-F_g(S;K)$, proving \eqref{eq:ceiling}.

\subsection{Minimal Stationary Counterexample}

Under the assumptions of \cref{thm:minimal}, stationarity and $\rho(0)=1$ give
the observable functionals
\begin{equation}
\begin{aligned}
\Var(Y_A)&=1+\nu^2,\\
\Cov(Y_A,\Theta)&=\frac1p\left(1+\sum_{l=1}^{p-1}\rho(l)\right),\\
\Var(\Theta)&=\frac1{p^2}\left(p+2\sum_{l=1}^{p-1}(p-l)\rho(l)\right).
\end{aligned}
\label{eq:stationary-observables}
\end{equation}
The last two functionals have Jacobian rows proportional to
$(1,\ldots,1)$ and $(p-1,\ldots,1)$. Hence
$\rank J=\min\{2,p-1\}$ and
$\dim\ker J=\max\{0,p-3\}$, proving parts (i)--(ii); at $p=4$ the kernel is
$\operatorname{span}\{(1,-2,1)\}$. Positive definiteness is an open condition,
so $K_\pm$ remain positive definite for all sufficiently small
$\varepsilon>0$.

The perturbation leaves
$b=\Cov(Z_1,\Theta)=\Cov(Z_2,\Theta)$ and $\Var(\Theta)$ unchanged. With
$R_B=\nu_B^2 I_2$, under $K_\pm$, $c=(b,b)^\top$ is an eigenvector of
$\Sigma_B^\pm$ with eigenvalue
$1+\nu_B^2+\rho(1)\pm\varepsilon$, giving
\eqref{eq:gap-closed-form}. Subtraction yields
\begin{equation}
\cI(B;K_-)-\cI(B;K_+)
=\frac{4b^2\varepsilon}
{\Var(\Theta)\left[\{1+\nu_B^2+\rho(1)\}^2-\varepsilon^2\right]}.
\label{eq:gap-value}
\end{equation}
Admissibility makes the denominator positive, so the gap is strict exactly
when $b\ne0$.

\subsection{General, Nonlinear and Augmented Protocols}

For \cref{thm:impossibility}, positive definiteness of $K_\varepsilon$ and
of $\Sigma_B(K_\varepsilon)$ persists for sufficiently small
$|\varepsilon|$, while $\diag(\Delta)=0$ preserves the unit diagonal.
Invisibility preserves the three Gaussian covariance blocks in
\eqref{eq:observables}, proving \eqref{eq:obs-equivalent}. Moreover,
$h^\top\Delta h=0$ makes the denominator of
$\cI(B;K_\varepsilon)$ constant. At zero, the inverse rule is
$\dd\Sigma_B(K_\varepsilon)^{-1}/\dd\varepsilon
=-W(B\Delta B^\top)W$. Applying the product rule to
$h^\top K_\varepsilon B^\top\Sigma_B(K_\varepsilon)^{-1}
BK_\varepsilon h$ gives \eqref{eq:ceiling-derivative}. Since the difference
between the values at $K_0\pm\varepsilon\Delta$ is
$2\varepsilon D_B(\Delta;K_0)+o(\varepsilon)$, a non-zero derivative gives
local separation.

For \cref{ex:three-point}, let $s=1+\nu_B^2$. Under $K_a$ and $K_a'$,
respectively, the posterior-replica covariance matrices for the one-point
alternative $B=e_1^\top$ are
\begin{equation*}
Q_B(K_a)=\frac1s(a,1,0)^\top(a,1,0),
\qquad
Q_B(K_a')=\frac1s(0,1,0)^\top(0,1,0).
\end{equation*}
Because $C_g(0)=0$, \eqref{eq:VF} gives
\begin{equation}
\cI_g(B;K_a)-\cI_g(B;K_a')
=\frac{C_g(a^2/s)+2C_g(a/s)}{3C_g(1)+2C_g(a)}.
\label{eq:three-point-gap}
\end{equation}
For nonconstant $g$, at least one nonconstant Hermite coefficient is non-zero,
so \eqref{eq:Cg} gives $C_g(r)>0$ for every $r>0$. The numerator and denominator
in \eqref{eq:three-point-gap} are consequently
positive for every $a\in(0,1)$ and every finite noise variance, proving the
universal strict inequality asserted in the example.

To prove \cref{prop:augmentation}, observe that the augmented law identifies
$\bar AK\bar A^\top$. If
$\rowspace(B)\subseteq\rowspace(\bar A)$, there is a matrix $C$ such that
$B=C\bar A$. It then determines
$BKB^\top=C(\bar AK\bar A^\top)C^\top$ and $BKh=C\bar AKh$,
and $h^\top Kh$ is already a block of the benchmark covariance. Together with
known $R_B$, these quantities determine the joint Gaussian law of
$(Y_B,\Theta)$ and hence its value, proving part (i). For part (ii), any
$\Delta\in\cD(A_{\rm aug},h)$ preserves the augmented benchmark law, and
\cref{thm:impossibility} applied with realised operator $A_{\rm aug}$ gives
the stated first-order failure certificate.

\section{Proofs for Finite Calibration}
\label{sec:app-calibration}

\subsection{Regularity of the Target Covariance Transform}

For \cref{prop:regularity}, if
$g-\E g=\sum_{k\ge1}(a_k/k!)H_k$, then
$C_g(r)=\sum_{k\ge1}(a_k^2/k!)r^k$. For
$g\in W^{1,2}(\phi)$, termwise differentiation and the Gaussian Sobolev
identity give
\begin{equation}
L_g:=\sup_{|r|\le1}|C_g'(r)|
\le \sum_{k\ge1}\frac{k a_k^2}{k!}
=\E\{g'(U)^2\}<\infty.
\label{eq:app-smooth-L}
\end{equation}
The non-negative coefficients show that equality in the supremum is attained
at $r=1$, proving the Lipschitz claim in part (i).

For $g_c(z)=\Ind\{z>c\}$, Plackett's identity gives, for $-1<r<1$,
\begin{equation*}
C_{g_c}'(r)=
\frac{\exp\{-c^2/(1+r)\}}{2\pi\sqrt{1-r^2}}.
\end{equation*}
Together with the sharp arcsine inequality
$|\arcsin b-\arcsin a|\le(\pi/\sqrt2)|b-a|^{1/2}$, this yields the global
bound
\begin{equation}
|C_{g_c}(r_2)-C_{g_c}(r_1)|
\le \frac{1}{2\sqrt2}|r_2-r_1|^{1/2}.
\label{eq:app-threshold-holder}
\end{equation}
At $c=0$ and $(r_1,r_2)=(-1,1)$ the constant is attained. Expanding the same
derivative as $r\uparrow1$ and integrating over $[1-\delta,1]$ gives
\eqref{eq:boundary-asymptotic}, so no larger global H\"older exponent is
possible. Finally, on $[-r_{\max},r_{\max}]$ the derivative is bounded by
\begin{equation}
L_{g_c}(r_{\max})=
\frac{\exp\{-c^2/(1+r_{\max})\}}
{2\pi\sqrt{1-r_{\max}^2}},
\label{eq:app-threshold-local-L}
\end{equation}
which proves part (iii).

\subsection{Covariance Repair and Resolvent Bounds}

For covariance repair, put $\bar K=\widehat\Sigma-\widehat R_0$ and
$e_0=\opnorm{\widehat\Sigma-(K+R_0)}+\opnorm{\widehat R_0-R_0}$.
If $0<\tau\le\lambda_{\min}(K)/2$ and
$e_0\le\min\{1/2,\lambda_{\min}(K)/2\}$, Weyl's inequality gives
$\bar K\succeq\tau I_p$, so flooring is inactive. For
$H=\Diag\{\diag(\bar K)\}^{-1/2}$, the unit diagonal of $K$ gives
$\opnorm{H-I_p}\le2e_0$ and $\opnorm H\le2$. Expanding
$H\bar K H-K=(H-I_p)\bar K H+(\bar K-K)H+K(H-I_p)$ yields
\begin{equation}
\opnorm{\cR_\tau(\bar K)-K}
\le(6\opnorm K+4)e_0
\le6(1+\opnorm K)e_0.
\label{eq:app-repair-bound}
\end{equation}
Thus diagonal rescaling preserves the raw rate at an interior $K$.

For the uniform resolvent bound, let $E=\widehat K-K$, $e=\opnorm E$, and
$\kappa=\max\{\opnorm K,\opnorm{\widehat K}\}$. Under
\eqref{eq:family-conditioning}, if $ae\le\lambda/2$, then for every non-empty
$S\in\Pi_{\mathcal B}$ the inverse and resolvent bounds are
\begin{equation*}
\begin{aligned}
\opnorm{(A_S\widehat K A_S^\top+R_S)^{-1}}&\le2/\lambda,\\
\opnorm{(A_S\widehat K A_S^\top+R_S)^{-1}
-(A_SKA_S^\top+R_S)^{-1}}&\le2ae/\lambda^2.
\end{aligned}
\end{equation*}
Expanding the two outer factors and the inverse in $Q_S$ then yields
\begin{equation}
\opnorm{Q_S(\widehat K)-Q_S(K)}\le L_Qe,
\qquad
L_Q=\frac{4\kappa a}{\lambda}
    +\frac{2\kappa^2a^2}{\lambda^2},
\label{eq:app-resolvent-bound}
\end{equation}
uniformly over the feasible family. Thus the constant worsens only through the
latent covariance scale, measurement amplification $a$, and protocol
conditioning $\lambda$.

\subsection{Value Error, Calibration Rates and Regret}

For \cref{thm:uniform-error}, the empty protocol has value zero under both
matrices. For non-empty $S$, non-negative weights summing to one, the
target modulus and \eqref{eq:app-resolvent-bound} imply
\begin{equation*}
|F_g(S;\widehat K)-F_g(S;K)|\le L(L_Qe)^\beta,
\qquad
|V_g(\widehat K)-V_g(K)|\le Le^\beta.
\end{equation*}
For the local threshold case, the second bound uses that $K$ and
$\widehat K$ have the same unit diagonal, so only off-diagonal arguments vary.
Since $0\le F_g(S;K)\le V_g(K)$, the ratio identity gives the more explicit
uniform bound
\begin{equation}
\sup_{S\in\Pi_{\mathcal B}}
|\widehat\cI_g(S)-\cI_g(S)|
\le
\frac{L(L_Q^\beta+1)e^\beta}{V_g(K)-Le^\beta}.
\label{eq:app-explicit-uniform}
\end{equation}
For sufficiently small $e$, the denominator is at least $v_0/2$ and
$\kappa$ is bounded by a fixed neighbourhood of $K$. Hence
\eqref{eq:app-explicit-uniform} is at most
$2L(L_Q^\beta+1)e^\beta/v_0$, which is \eqref{eq:uniform-error} with exactly
the dependencies stated there. For a threshold target satisfying
\eqref{eq:r0}, continuity of all finitely or uniformly conditioned resolvents
keeps the varying correlations in a common compact subinterval of $(-1,1)$;
the local Lipschitz constant \eqref{eq:app-threshold-local-L} then applies.

For \cref{thm:fixed-model}, fixed-dimensional Gaussian covariance
concentration gives
\begin{equation*}
\opnorm{\widehat\Sigma-(K+R_0)}=O_p(m^{-1/2}).
\end{equation*}
The assumed rate for $\widehat R_0$ therefore makes
$e_0=O_p(m^{-1/2})$. Because $K\succ0$ and
$\tau_m\downarrow0$, the conditions of \eqref{eq:app-repair-bound} hold with
probability tending to one, giving
$\opnorm{\widehat K-K}=O_p(m^{-1/2})$.

Substitution into \eqref{eq:app-explicit-uniform} gives root-$m$ value error
when $\beta=1$ and $m^{-1/4}$ under the global threshold envelope
$\beta=1/2$. At a fixed threshold model satisfying \eqref{eq:r0}, the local
Lipschitz modulus has $\beta=1$, restoring the root-$m$ rate. The family is
finite, so the conditioning and correlation neighbourhoods can be chosen
uniformly over all $S\in\Pi_{\mathcal B}$, which gives the stated rate.

For \cref{cor:regret}, insert the empirical value on both sides of the
comparison between $S^*$ and $\widetilde S$:
\begin{align*}
\cI_g(S^*)-\cI_g(\widetilde S)
&\le
\{\cI_g(S^*)-\widehat\cI_g(S^*)\}
+\{\widehat\cI_g(S^*)-\widehat\cI_g(\widetilde S)\}
+\{\widehat\cI_g(\widetilde S)-\cI_g(\widetilde S)\}\\
&\le2\varepsilon_m+\eta_m,
\end{align*}
which proves \eqref{eq:regret-opt}. Applying the same argument within
$\Pi^{(\ell)}$ gives
$\cI_g(\widehat S_\ell)\ge
\max_{S\in\Pi^{(\ell)}}\cI_g(S)-2\varepsilon_\ell$; subtracting this lower
bound from the global optimum gives \eqref{eq:resolution-bound}.

\subsection{Best-Linear Calibration}
\label{sec:app-best-linear}

\theoremstyle{plain}
\newtheorem*{bestlinearproposition}{Proposition}
\begin{bestlinearproposition}[Uniform root-$m$ error]
For non-empty $S$, write $\Sigma_S=\Var(Y_S)$ and
$c_S=\Cov(Y_S,\Theta)$, and let $v=\Var(\Theta)$. Suppose, uniformly over
$S\in\Pi_{\mathcal B}$, that $v\ge v_0>0$,
$\lambda_{\min}(\Sigma_S)\ge\lambda>0$, $\|c_S\|_2\le C_c<\infty$, and
\begin{equation*}
\delta_m:=\sup_{S\ne\varnothing}
\{\opnorm{\widehat\Sigma_S-\Sigma_S}+\|\widehat c_S-c_S\|_2\}
+|\widehat v-v|=O_p(m^{-1/2}).
\end{equation*}
Set both values of the empty protocol to zero; otherwise use
$\cI_{\mathrm L}(S)=c_S^\top\Sigma_S^{-1}c_S/v$ and its plug-in version.
Then
$\sup_{S\in\Pi_{\mathcal B}}|\widehat\cI_{\mathrm L}(S)-\cI_{\mathrm L}(S)|
=O_p(m^{-1/2})$.

In the measurement model, where $\Sigma_S=A_SKA_S^\top+R_S$ and
$c_S=A_Sc$ for $c=\Cov(Z,\Theta)$, it suffices that
$\sup_S\opnorm{A_S}<\infty$ and
\begin{equation*}
\opnorm{\widehat K-K}+\|\widehat c-c\|_2+|\widehat v-v|
+\sup_S\opnorm{\widehat R_S-R_S}=O_p(m^{-1/2}),
\end{equation*}
using $\widehat\Sigma_S=A_S\widehat K A_S^\top+\widehat R_S$ and
$\widehat c_S=A_S\widehat c$.
\end{bestlinearproposition}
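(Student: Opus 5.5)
The plan is a deterministic perturbation bound for the map $(\Sigma,c,v)\mapsto c^\top\Sigma^{-1}c/v$ that is uniform over $S$. The stochastic rate then follows from $\delta_m=O_p(m^{-1/2})$. The empty protocol contributes zero error by convention, so take $S\ne\varnothing$. Work on the event $E_m=\{\delta_m\le\min(\lambda/2,v_0/2)\}$, whose probability tends to one because $\delta_m=O_p(m^{-1/2})$. On $E_m$, Weyl's inequality gives $\lambda_{\min}(\widehat\Sigma_S)\ge\lambda/2$ for every $S$ simultaneously. Hence $\widehat\Sigma_S$ is invertible with $\opnorm{\widehat\Sigma_S^{-1}}\le2/\lambda$, and the Moore--Penrose inverse coincides with the ordinary inverse. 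Likewise $\widehat v\ge v_0/2$ and $\|\widehat c_S\|_2\le C_c+\delta_m\le C_c+1$ for small $\delta_m$.

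For the numerators $N_S=c_S^\top\Sigma_S^{-1}c_S$ and $\widehat N_S$, I will use the three-term telescoping
\[
\widehat N_S-N_S=(\widehat c_S-c_S)^\top\widehat\Sigma_S^{-1}\widehat c_S
+c_S^\top(\widehat\Sigma_S^{-1}-\Sigma_S^{-1})\widehat c_S
+c_S^\top\Sigma_S^{-1}(\widehat c_S-c_S).
\]
The resolvent identity $\widehat\Sigma_S^{-1}-\Sigma_S^{-1}=-\widehat\Sigma_S^{-1}(\widehat\Sigma_S-\Sigma_S)\Sigma_S^{-1}$ gives $\opnorm{\widehat\Sigma_S^{-1}-\Sigma_S^{-1}}\le2\delta_m/\lambda^2$. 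Therefore
\[
|\widehat N_S-N_S|\le\frac{3(C_c+1)}{\lambda}\,\delta_m+\frac{2C_c(C_c+1)}{\lambda^2}\,\delta_m.
\]
Also $N_S\le C_c^2/\lambda$. The ratio step follows from the identity $\widehat N/\widehat v-N/v=(\widehat N-N)/\widehat v+N(v-\widehat v)/(v\widehat v)$, which gives
\[
|\widehat\cI_{\mathrm L}(S)-\cI_{\mathrm L}(S)|\le\frac{2}{v_0}|\widehat N_S-N_S|+\frac{2C_c^2}{\lambda v_0^2}\,\delta_m.
\]
Every constant here is free of $S$. So on $E_m$ the supremum is at most $C'\delta_m$, and this is $O_p(m^{-1/2})$.

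For the measurement-model clause, I will verify the hypotheses of the first part. Write $a=\sup_S\opnorm{A_S}^2$. Then $\opnorm{\widehat\Sigma_S-\Sigma_S}\le a\opnorm{\widehat K-K}+\opnorm{\widehat R_S-R_S}$ and $\|\widehat c_S-c_S\|_2\le a^{1/2}\|\widehat c-c\|_2$, so $\delta_m=O_p(m^{-1/2})$. The bound $\|c_S\|_2\le a^{1/2}\|c\|_2$ supplies $C_c$. The conditions $\lambda_{\min}(\Sigma_S)\ge\lambda$ and $v\ge v_0$ remain as stated assumptions, as in \eqref{eq:family-conditioning}.

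The main obstacle is keeping $\widehat\Sigma_S$ uniformly invertible across a possibly infinite family, so that the pseudo-inverse does not jump. This is exactly why I place the whole argument on the single event $E_m$ defined through the supremum $\delta_m$, rather than arguing protocol by protocol.
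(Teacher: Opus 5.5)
Your proof is correct and follows essentially the same route as the paper's: work on the event $\{\delta_m\le\min(\lambda/2,v_0/2)\}$, use the Weyl and resolvent bounds $2/\lambda$ and $2\delta_m/\lambda^2$, bound the quadratic-form perturbation together with $q_S\le C_c^2/\lambda$, divide using $\widehat v\ge v_0/2$, and reduce the measurement-model clause by the same operator-norm inequalities. The differences are cosmetic. The paper's numerator bound $\tfrac{2\delta_m}{\lambda}(2C_c+\delta_m)+\tfrac{2C_c^2\delta_m}{\lambda^2}$ comes from a two-term split rather than your three-term telescoping. It also keeps $C_c+\delta_m$ rather than requiring $\delta_m\le 1$.
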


\begin{proof}
On the event $\delta_m\le\min(\lambda/2,v_0/2)$, the resolvent identity
and $q_S=c_S^\top\Sigma_S^{-1}c_S$ give, uniformly over non-empty $S$,
\begin{equation*}
\begin{aligned}
\opnorm{\widehat\Sigma_S^{-1}}&\le\frac{2}{\lambda},
&\qquad
\opnorm{\widehat\Sigma_S^{-1}-\Sigma_S^{-1}}
&\le\frac{2\delta_m}{\lambda^2},\\
\sup_S|\widehat q_S-q_S|&\le
\frac{2\delta_m}{\lambda}(2C_c+\delta_m)
+\frac{2C_c^2\delta_m}{\lambda^2}
=O_p(m^{-1/2}).
\end{aligned}
\end{equation*}
Because $q_S\le C_c^2/\lambda$ and $\widehat v\ge v_0/2$, division by
$\widehat v$ and $v$ proves the claim. The primitive rate implies the assumed
one through
$\opnorm{\widehat\Sigma_S-\Sigma_S}
\le\opnorm{A_S}^2\opnorm{\widehat K-K}
+\opnorm{\widehat R_S-R_S}$ and
$\|\widehat c_S-c_S\|_2\le\opnorm{A_S}\|\widehat c-c\|_2$.
\end{proof}

\section{Design Derivations and Algorithms}
\label{sec:app-design}

\subsection{Rank-One Update}

For a non-empty $S$, write
$M=A_SKA_S^\top+R_S$, $b=A_SK\ell_a$ and
$d=\ell_a^\top K\ell_a+r_a$. The Schur complement of
$M_+=\left(\begin{smallmatrix}M&b\\b^\top&d\end{smallmatrix}\right)$ is
$d-b^\top M^{-1}b=s_{a\mid S}$, so the block-inverse formula
\citep[\S0.8]{horn2013matrix} gives
\begin{equation}
M_+^{-1}=
\begin{pmatrix}
M^{-1}+M^{-1}bs_{a\mid S}^{-1}b^\top M^{-1}
&-M^{-1}bs_{a\mid S}^{-1}\\
-s_{a\mid S}^{-1}b^\top M^{-1}&s_{a\mid S}^{-1}
\end{pmatrix}.
\label{eq:app-block-inverse}
\end{equation}
Substituting \eqref{eq:app-block-inverse} into the definition of $Q$ and
collecting the correction as an outer product yields
\begin{equation*}
Q_{S\cup\{a\}}-Q_S
=\frac{(K\ell_a-KA_S^\top M^{-1}b)
       (K\ell_a-KA_S^\top M^{-1}b)^\top}{s_{a\mid S}}
=\frac{P_S\ell_a\ell_a^\top P_S}{s_{a\mid S}}.
\end{equation*}
This is the stated $Q$ update; subtracting it from $K$ gives the $P$ update.
For $S=\varnothing$, the same formulas follow directly from
$Q_\varnothing=0$ and $P_\varnothing=K$. Substitution into \eqref{eq:VF}
gives \eqref{eq:marginal-nonlinear}, and setting $C_g(r)=r$ gives
\eqref{eq:marginal-mean}.

A forward pass uses rank-one $O(p^2)$ scores and costs
$O(d_{\mathcal B}|\cV|p^2)$, where
$d_{\mathcal B}=\max_{S\in\Pi_{\mathcal B}}|S|$. The implemented
swap sweep directly recomputes at most $|S||\cV|$ trial objectives, each in
$O(p^2|S|+|S|^3)$, and therefore costs
$O\{|S||\cV|(p^2|S|+|S|^3)\}$.

For \cref{prop:monotone}, $S\subseteq S'$ nests the two observation
$\sigma$-fields. Conditional expectation is an $L^2$ projection, so projection
contraction gives $F_g(S;K)\le F_g(S';K)$.

\subsection{Forward Selection and One-Swap Refinement}
\label{sec:app-design-search}

In exact-$d$ mode, assume that $\Pi_{\mathcal B}$ contains a protocol of
cardinality $d$. An addition $a\notin S$ is \emph{completion-preserving} if
there exists $S'\in\Pi_{\mathcal B}$ such that
$S\cup\{a\}\subseteq S'$ and $|S'|=d$. We call an addition \emph{admissible}
if it is completion-preserving in exact-$d$ mode, and if
$S\cup\{a\}\in\Pi_{\mathcal B}$ in at-most-$\mathcal B$ mode.

\begin{algorithm}[H]
\caption{Exact-gain forward selection with one-swap refinement}
\label{alg:forward-swap}
\begin{algorithmic}[1]
\Require $K$, $g$, $\omega$, catalogue $\cV$, family $\Pi_{\mathcal B}$; mode
exact-$d$ or at-most-$\mathcal B$
\State Initialise $S\gets\varnothing$, $Q_S\gets0$, $P_S\gets K$
\While{exact mode has $|S|<d$, or at-most mode has a positive-gain addition}
  \State choose the best admissible addition (gain per cost
  if costs differ), add it and update $Q_S,P_S$
\EndWhile
\While{a feasible improving swap exists and the swap cap is not reached}
  \State apply the first improving selected--unselected swap and update $Q_S,P_S$
\EndWhile
\State \Return $S$
\end{algorithmic}
\end{algorithm}

Exact mode continues to $|S|=d$; at-most mode stops when no feasible addition
has positive gain. Because $F_g$ need not be submodular, a zero-gain action can
enable a positive joint gain, so the latter rule is heuristic. Exhaustive optima are
reported for small families.

\subsection{Comparator Objectives}
\label{sec:app-design-comparators}

For the latent-state mutual-information comparator, when $R_S\succ0$ the
target-free objective and marginal gain are
\begin{align}
F_{\rm MI}(S)=I(Z;Y_S)
&=\frac12\log\det\!\left(I+R_S^{-1/2}A_SKA_S^\top R_S^{-1/2}\right),
\label{eq:mi-baseline}
\\[-2pt]
\Delta_{\rm MI}(a\mid S)
&=\frac12\log\frac{\ell_a^\top P_S\ell_a+r_a}{r_a},
\label{eq:mi-gain}
\end{align}
where the second line assumes $r_a>0$.

The integrated-posterior-variance comparator minimises
$\sum_j\omega_j(P_S)_{jj}$ and has marginal reduction
\begin{equation}
\Delta_{\rm IPV}(a\mid S)
=\sum_j\omega_j\frac{(P_S\ell_a)_j^2}{s_{a\mid S}}.
\label{eq:ipv-gain}
\end{equation}

The linear-target comparator replaces $g$ by the identity and maximises
$\omega^\top Q_S(K)\omega$. The kernel-quadrature comparator uses the same
linear criterion with $R_S=0$. Together, these objectives separate target
nonlinearity, measurement noise and target-free latent coverage.

\section{Simulation Settings}
\label{sec:appendix-simulations}

\subsection{Models, Targets and Protocol Families}

Unless stated otherwise, $Z$ is a standardised Gaussian trajectory on a
uniform grid. For lag $u$, the stationary correlations are OU,
$\rho(u)=e^{-u/\tau}$; Mat\'ern-3/2,
$\rho(u)=(1+2u/\tau)e^{-2u/\tau}$, scaled so that
$\int_0^\infty \rho(u)\dd u=\tau$; a weighted two-scale OU mixture; and the
damped-periodic kernel $e^{-u/6}\cos(2\pi u/3)$. A between-unit trait share
$\alpha$ gives
$K_{jk}=\alpha+(1-\alpha)\rho(|t_j-t_k|)$. The five targets in the design
comparison are the temporal mean, occupation above thresholds $0$ and $1.5$,
two-sided occupation outside $[-1.2,1.2]$, and the logistic aggregate
$g(z)=\{1+e^{-2z}\}^{-1}$. The calibration experiments use the target subsets
specified below.

\begingroup
\setlength{\intextsep}{5pt}
\begin{table}[H]
\centering\footnotesize
\setlength{\tabcolsep}{2pt}
\renewcommand{\arraystretch}{0.94}
\begin{tabular}{@{}>{\raggedright\arraybackslash}p{0.16\textwidth}
>{\raggedright\arraybackslash}p{0.36\textwidth}
>{\raggedright\arraybackslash}p{0.44\textwidth}@{}}
\toprule
Experiment & Latent and calibration model & Actions, family and repetitions \\
\midrule
Uniform error,
\cref{fig:calibration}(a)
& $T=20$, $p=128$, OU $\tau=1$, $\alpha=0$; known calibration-noise variance
$0.25$, action-noise variance $0.5$; mean and occupation-at-zero targets
& 12 equally spaced point actions, exactly four selected
($\numEstFamily$ protocols); $m\in\{25,50,100,250,500,1000\}$ and
$\numEstReplications$ replications. \\[2pt]

Selection regret,
\cref{fig:calibration}(b)
& $T=20$, $p=128$; trait--OU $(\alpha,\tau)=(0.2,2)$ and two-scale OU
$(\tau_1,\tau_2,w)=(0.5,6,0.5)$; noiseless calibration and action-noise
variance $0.5$; mean and occupation targets at thresholds zero and one
& Points at $\{1,3,\ldots,19\}$, width-three windows centred at 5 and 15,
and pre-specified three-replicate point packages at 2 and 18; exactly four of
14 actions; the same $m$ grid and replication count as panel (a). \\[2pt]

Nested protocol classes,
\cref{fig:calibration}(c,d)
& $T=20$, $p=64$; noiseless calibration; occupation-at-zero target;
action-noise variance $0.4$;
$K_{jk}=0.15+0.85\exp\{-|t_j-t_k|/[\tau(t_j)\tau(t_k)]^{1/2}\}$, with
$\tau(t)$ linear from $0.3$ to $3$
& Exactly four actions are selected. L1 compares contiguous with dispersed
layouts; L2 adds early, middle, late and full-horizon phases; L3 uses supports
on eight coarse bins; L4 adds 495 fine-grid four-point subsets. The cumulative,
deduplicated class sizes are \numResClassSizeOne, \numResClassSizeTwo,
\numResClassSizeThree{} and \numResClassSizeFour; the same $m$ grid and 30
replications. \\
\bottomrule
\end{tabular}
\caption{Calibration and nested protocol-class configurations.}
\label{tab:synthetic-calibration-config}
\end{table}
\endgroup

For the target-aware design comparison in \cref{tab:s5}, all settings use
$T=10$, $p=128$ and $\alpha=0$. The stationary OU and Mat\'ern-3/2 settings
use $\tau=1$ and action-noise variance $0.4$. The horizon-varying setting uses
\[
K_{jk}
=
\exp\{-|t_j-t_k|/[\tau(t_j)\tau(t_k)]^{1/2}\},
\qquad
\tau(t)=0.5+3.5t/10,
\]
with action-noise variance $0.15$. The recency-weighted OU setting uses
$\tau=1$, target weights proportional to $e^{-(T-t)/6}$ and action-noise
variance $0.4$. The heterogeneous-action setting uses OU scale $\tau=0.7$
and time-dependent action-noise variance $1+2t/10$.

The first four settings use 12 point actions at
$\{0.5,0.5+9/11,\ldots,9.5\}$ and select exactly four actions. The
heterogeneous family uses six centres $0.5+1.8k$, $k=0,\ldots,5$, widths
$\{0,1.5,3.5\}$, cost $1+0.15w$ and a cost budget of four. All feasible
protocols are enumerated to obtain the target-specific optimum. The linear
comparator uses the actual action noise, kernel quadrature its standard
noiseless criterion, and mutual information and integrated posterior variance
the same covariance and action catalogue. Greedy search starts from the empty
protocol; one-swap refinement takes the best feasible exchange until no
improvement remains or \numSwapRounds{} exchanges have been accepted.

\subsection{Numerical Evaluation of Nonlinear Targets}
\label{sec:appendix-numerical}

For one-sided threshold targets, $C_g$ is evaluated from the Plackett integral.
If $G_c(r)$ denotes the corresponding one-sided covariance transform, then the
two-sided occupation target uses
\begin{equation*}
C_{{\rm two},c}(r)=2\{G_c(r)+G_c(-r)\}.
\end{equation*}
The logistic transform is evaluated through a truncated Hermite expansion,
with coefficients computed by Gauss--Hermite quadrature. 

\section{Sleep-EDF and Long-Term AF Analysis Details}
\label{sec:appendix-real}
\suppressfloats[t]

\subsection{Annotation Mapping and Estimands}

Proportional binning maps unequal-length records to relative time. A Sleep
action reads the distinct midpoint epoch of its bin, so adjacent anchors need
not be consecutive raw epochs; an AF action averages one bin. With common-grid
trajectory $X_i\in[0,1]^p$, protocol $S$ observes $Y_{i,S}=A_SX_i$, while
$\Theta_i$ is the exact state proportion over the complete analysable interval,
not the anchor-grid mean.

Sleep retains pre- and post-sleep Wake and excludes movement and unknown epochs,
leaving \numSleepHours{} annotated hours (median \numSleepMedian{} per record);
night assignments follow \texttt{ST-subjects.xls}. The zero-cost baseline $C_i$
contains SC/ST, valid-stage duration and placebo/temazepam, with treatment zero
outside ST; neither it nor the validity mask is charged to the temporal budget.
All stages use $w_i^{(0)}=1/n_{s(i)}$ when subject $s(i)$ contributes $n_{s(i)}$
records. For AF, analysis starts at the first rhythm marker and excludes any
unannotated prefix from the target and candidate windows. Templates average $N$
bins, observing fraction $N/\numAfGrid$ despite record-specific elapsed time;
records are treated as the analysis units and receive equal weights.

The four quantities serve different roles. $\cI_g(S)$ is the population
protocol-value estimand, and $J_{\tau\mid C}(S)$ is the foldwise criterion used
to select Sleep supports. The pooled $\widehat R^2_{\rm cf}(S)$ reports
held-out predictive performance, while the full-sample
$\widehat{\cI}_{\rm L,p}(S)$ is used only for the AF grid-sensitivity analysis.

\subsection{Foldwise Selection and Held-Out Prediction}
\label{sec:appendix-real-prediction}

Within each Sleep outer-training fold, weighted least squares on an intercept
and $C$ gives residual anchors $X^\perp$, target $\Theta^\perp$ and moments
\begin{equation}
\widehat\Sigma_{0\mid C}=\Cov_w(X^\perp),\qquad
\widehat c_{\mid C}=\Cov_w(X^\perp,\Theta^\perp),\qquad
\widehat v_{\mid C}=\Var_w(\Theta^\perp).
\label{eq:real-moments}
\end{equation}
Writing $\widehat\Sigma_{0\mid C}=U\Diag(\widehat\lambda_j)U^\top$, the
fold-specific repair is
\begin{equation}
\widetilde\Sigma_\tau
=U\Diag\{\max(\widehat\lambda_j,\tau)\}U^\top,
\qquad \tau=q^{-1}\tr(\widehat\Sigma_{0\mid C})/p,
\label{eq:real-floor}
\end{equation}
where $q$ is the fold-specific number of independent training subjects (79--81;
median \numTabTrainSubjects); repeated nights do not increase it. Forward
selection followed by at most \numSwapRounds{} accepted one-swap updates
searches for a high-scoring support under
\begin{equation}
J_{\tau\mid C}(S)=
\frac{\widehat c_{\mid C}^\top A_S^\top
(A_S\widetilde\Sigma_\tau A_S^\top)^{-1}A_S\widehat c_{\mid C}}
{\widehat v_{\mid C}}.
\label{eq:real-objective}
\end{equation}
The target-agnostic learned comparator converts the same repaired residual
covariance to correlations and applies greedy search to the noiseless
kernel-quadrature criterion
for the uniform mean of standardised, nonconstant anchors, without
$\widehat c_{\mid C}$ or target outcomes. AF and fixed Sleep templates are
pre-specified.

Given a support, outer-training ridge with an intercept selects among 25
log-spaced penalties on $[10^{-6},10^2]$ by three-fold inner cross-validation,
grouped by subject for Sleep and record for AF. Centring and scaling use each
inner-training split and are recomputed on the full outer-training fold; Sleep
baselines are unpenalised, temporal features penalised, and predictions
unclipped.

Predictions from five outer test folds are pooled. With weighted target mean
$\bar\Theta_w$,
\begin{equation}
\widehat R^2_{\rm cf}(S)=1-
\frac{\sum_iw_i\{\Theta_i-\widehat\Theta_i^{(-f(i))}\}^2}
{\sum_iw_i(\Theta_i-\bar\Theta_w)^2}.
\label{eq:weighted-cf-r2}
\end{equation}
The statistic is computed once from all pooled held-out predictions rather than
by averaging foldwise values; as an out-of-sample $R^2$, it can be negative.

\subsection{Resampling and Sensitivity Analyses}
\label{sec:appendix-real-sensitivity}

Conditional 2.5--97.5 percentile ranges resample fixed-template held-out pairs
\numSleepFixedRangeResamples{} times by Sleep subject and
\numAfFixedRangeResamples{} times by AF record. SC/ST-stratified Sleep draws give
equal total weight per sampled subject. Because the fitted predictors remain
fixed, these ranges describe variation in the realised held-out pairs
conditional on those predictors. Pipeline stability is
assessed by \numSelectionSubsampleReps{} study-stratified samples of
\numSelectionSubsamplePct{} of subjects without replacement. Every subsample
reruns standardisation, moment estimation, support selection, inner tuning,
predictor fitting and held-out evaluation, while keeping each subject within
one outer and inner fold.

Alternative weighting and covariance-floor choices preserved the fixed-template ordering but changed the selected supports, consistent with the instability of fine support selection reported in the main analysis. Subject- versus recording-weighted moments preserve the fixed-template
score sign in all 15 cells but yield support overlap as low as
\numBalanceFineJaccardMin. Floor multipliers $\{0.5,1,2\}$ give maximum
outer-fold mean contrast range \numFloorCoarseSpread{} and overlap
\numFloorFineJaccardMin; the fixed-template ordering is stable while the
selected support varies.

SC-only and ST-only sensitivities rerun the full subject-grouped pipeline.
At $N=16$, \cref{tab:sleep-sensitivity} gives all target contrasts. For AF,
two outcome-defined cohorts assess sensitivity to excluding records with AF
burden at or near zero or one;
\cref{tab:afcohort} reports all fixed-template results at $N=4$ and $N=16$.
The main text summarises all 30 Sleep source cells by positive, negative and
unresolved ranges.

\begin{table}[H]
\centering\footnotesize
\setlength{\tabcolsep}{4pt}
\begin{tabular}{@{}lccc@{}}
\toprule
Target at $N=16$ & pooled adjusted & SC adjusted & ST adjusted \\
\midrule
REM & $\numSleepFullRemSixteenAdjusted$ & $\numSleepFullRemSixteenSC$ & $\numSleepFullRemSixteenST$ \\
N3 & $\numSleepFullNThreeSixteenAdjusted$ & $\numSleepFullNThreeSixteenSC$ & $\numSleepFullNThreeSixteenST$ \\
Wake & $\numSleepFullWakeSixteenAdjusted$ & $\numSleepFullWakeSixteenSC$ & $\numSleepFullWakeSixteenST$ \\
\bottomrule
\end{tabular}
\caption{Dispersed-minus-contiguous cross-fitted $R^2$ differences and
conditional 2.5--97.5 percentile ranges at the middle Sleep budget.}
\label{tab:sleep-sensitivity}
\end{table}

\begin{table}[H]
\centering\footnotesize
\setlength{\tabcolsep}{3.8pt}
\begin{tabular}{@{}lrrrrr@{}}
\toprule
& & \multicolumn{2}{c}{$N=4$ (\numAfObservedPctFour)}
& \multicolumn{2}{c}{$N=16$ (\numAfObservedPctSixteen)} \\
\cmidrule(lr){3-4}\cmidrule(lr){5-6}
Cohort & records & contiguous & dispersed & contiguous & dispersed \\
\midrule
All records & $\numAfCohortNAll$ &
$\numAfCohortAllContigFour$ & $\numAfCohortAllDispFour$ &
$\numAfCohortAllContigSixteen$ & $\numAfCohortAllDispSixteen$ \\
$0<\Theta_{\rm AF}<1$ & $\numAfCohortNStrict$ &
$\numAfCohortStrictContigFour$ & $\numAfCohortStrictDispFour$ &
$\numAfCohortStrictContigSixteen$ & $\numAfCohortStrictDispSixteen$ \\
$0.05\le\Theta_{\rm AF}\le0.95$ & $\numAfCohortNMixed$ &
$\numAfCohortMixedContigFour$ & $\numAfCohortMixedDispFour$ &
$\numAfCohortMixedContigSixteen$ & $\numAfCohortMixedDispSixteen$ \\
\bottomrule
\end{tabular}
\caption{Cross-fitted $R^2$ in the primary AF cohort and two
outcome-defined sensitivity cohorts.}
\label{tab:afcohort}
\end{table}

The AF grid diagnostic complements held-out prediction by measuring
discretisation sensitivity. For
$p\in\{64,128,256\}$, let $F_i^{(p)}$ be record $i$'s binwise raw AF fractions
and $\Theta_i^{(p)}=p^{-1}\mathbf 1^\top F_i^{(p)}$. On all
\numLtafCohortAll{} records, set $\Sigma_F=\Cov(F^{(p)})$,
$c_F=\Cov(F^{(p)},\Theta^{(p)})$, $v_F=\Var(\Theta^{(p)})$ and compute
\begin{equation*}
\widehat{\cI}_{\rm L,p}(S)=
\frac{c_F^\top A_S^\top(A_S\Sigma_FA_S^\top)^+A_Sc_F}{v_F}.
\end{equation*}
Without flooring, singular values at most \numLtafResolutionRcond{} times the
largest are discarded and windows use exact fractional overlap between their
continuous time intervals and the grid bins. The four diagnostics are a centred
one-hour contiguous window, a centred six-hour contiguous window, four uniformly
dispersed 15-minute windows and eight uniformly dispersed 15-minute windows.
Across these templates, the largest within-protocol range is
\numLtafResolutionSpread. This diagnostic quantifies numerical grid
sensitivity; \cref{fig:real}(b) and \cref{tab:afcohort} provide the held-out and
matched-budget comparisons.
\FloatBarrier

\begingroup
\small
\setlength{\baselineskip}{10.8pt}
\bibliography{references}
\endgroup

\end{document}